\newtheorem{definition}{Definition}
\newtheorem{proof}{Proof}
\newtheorem{theorem}{Theorem}
\newdefinition{remark}{Remark}
\newtheorem{lemma}{Lemma}
\newtheorem{corollary}{Corollary}
\def\0{{\bf 0}}
\def\1{{\bf 1}}
\begin{document}
\begin{frontmatter}

\title{VLM-RL: A Unified Vision Language Models and Reinforcement Learning Framework for Safe Autonomous Driving}

\author[a]{Zilin Huang\textsuperscript{†}}
\ead{zilin.huang@wisc.edu}
\author[a]{Zihao Sheng\textsuperscript{†}}
\ead{zihao.sheng@wisc.edu}
\author[b]{Yansong Qu\textsuperscript{†}}
\ead{qu120@purdue.edu}
\author[a]{Junwei You}
\ead{jyou38@wisc.edu} 
\author[a]{Sikai Chen\corref{cor1}}
\ead{sikai.chen@wisc.edu}

\cortext[cor1]{Corresponding author: Sikai Chen. {†} Equal contribution.}

\address[a]{Department of Civil and Environmental Engineering, University of Wisconsin-Madison, Madison, WI, 53706, USA}
\address[b]{Lyles School of Civil Engineering, Purdue University, West Lafayette, IN 47907, USA}

\begin{abstract}
In recent years, reinforcement learning (RL)-based methods for learning driving policies have gained increasing attention in the autonomous driving community and have achieved remarkable progress in various driving scenarios. However, traditional RL approaches rely on manually engineered rewards, which require extensive human effort and often lack generalizability. To address these limitations, we propose \textbf{VLM-RL}, a unified framework that integrates pre-trained Vision-Language Models (VLMs) with RL to generate reward signals using image observation and natural language goals. The core of VLM-RL is the contrasting language goal (CLG)-as-reward paradigm, which uses positive and negative language goals to generate semantic rewards. We further introduce a hierarchical reward synthesis approach that combines CLG-based semantic rewards with vehicle state information, improving reward stability and offering a more comprehensive reward signal. Additionally, a batch-processing technique is employed to optimize computational efficiency during training. Extensive experiments in the CARLA simulator demonstrate that VLM-RL outperforms state-of-the-art baselines, achieving a 10.5\% reduction in collision rate, a 104.6\% increase in route completion rate, and robust generalization to unseen driving scenarios. Furthermore, VLM-RL can seamlessly integrate almost any standard RL algorithms, potentially revolutionizing the existing RL paradigm that relies on manual reward engineering and enabling continuous performance improvements. The demo video and code can be accessed at: \href{https://zilin-huang.github.io/VLM-RL-website/}{\textcolor{magenta}{https://zilin-huang.github.io/VLM-RL-website/}}.
\end{abstract}
		
\begin{keyword}
Autonomous Driving, Vision Language Models, Reinforcement Learning, Reward Design
\end{keyword}
		
\end{frontmatter}

\section{Introduction}

Autonomous driving technology has made significant progress in recent years, yet achieving human-level safety and reliability remains a fundamental challenge \citep{feng2023dense,huang2024toward}. A particularly acute challenge is how to develop safe and generalizable driving policies for complex traffic environments \citep{di2021survey, cao2022trustworthy, he2024trustworthy, huang2024trustworthy, sheng2024ego, yao2023goal}. The rapid advancement of deep learning has catalyzed remarkable developments in this domain, particularly through imitation learning (IL) and reinforcement learning (RL) \citep{huang2024human, wu2024recent}, as shown in Fig.~\ref{fig1} (a). IL aims to learn driving policies by mimicking expert demonstrations, achieving impressive performance in controlled environments \citep{huang2024human}. Yet, IL-based methods face inherent limitations: they heavily depend on the scale and quality of demonstration data and exhibit poor generalization beyond the training distribution. In contrast, RL enables agents to actively learn optimal driving policies through direct interaction with the environment by maximizing carefully designed reward functions \citep{huang2024human}. The effectiveness of RL-enabled methods has been demonstrated in various decision-making scenarios, such as safe navigation \citep{mao2024integrating, he2024trustworthy}, car-following control \citep{hart2024towards}, trajectory control \citep{sheng2024traffic}, and lane change \citep{guo2024modeling}.

Nevertheless, a major challenge in applying RL is designing an appropriate reward function that will lead to the desired behavior \citep{ma2023eureka, venuto2024code, xietext2reward, wang2024rl}. While RL has been remarkably successful in domains where the reward function is clearly defined (e.g., gaming, robot manipulation), its application to autonomous driving remains troubled \citep{ye2024lord, hazra2024revolve, han2024autoreward, zhou2024context}. The fundamental difficulty is that the notion of \textit{``good driving"} encompasses complex, context-dependent behaviors, relying on tacit knowledge that is difficult to quantify and encode as a reward function \citep{ye2024lord}. This reflects Polanyi's paradox, which asserts that \textit{``we know more than we can tell"} \citep{polanyi2009tacit}. Traditionally, reward functions in the field of autonomous driving are usually manually designed based on expert intuition and heuristics, which often combine multiple sub-objectives such as speed maintenance, lane following, and collision avoidance \citep{chen2022interpretable, wang2023efficient, zhang2024chatscene}. However, this procedure, known as \textit{``reward engineering"}, requires considerable human effort and trial-and-error iterations \citep{abouelazm2024review}. As noted by \cite{han2024autoreward, knox2023reward, abouelazm2024review}, it faces several challenges such as expert knowledge dependence, multi-objective conflicts, and generalizability limitations.

Recent breakthroughs in foundation models, particularly large language models (LLMs) \citep{openai2023chatgpt} and vision-language models (VLMs) \citep{radford2021learning}, have demonstrated remarkable capabilities in understanding natural language instructions and complex visual scenes. This progress has inspired researchers to explore the use of these foundation models for reward shaping in RL, offering a promising solution to the longstanding challenge of reward design. The key idea is to leverage the rich semantic understanding capabilities of foundation models to translate human-specified goals into reward signals that can guide RL training effectively \citep{venuto2024code, ma2023eureka, xietext2reward}. In the field of robotics, integrating foundation models into reward functions has shown strong performance and promising generalization capabilities. Many experiments in this domain focus on tasks where the desired goal state is well-defined and easily specified, such as \textit{``Put carrot in bowl"} for manipulation tasks \citep{kim2024openvla}. This explicit goal specification allows researchers to harness pre-trained foundation models as zero-shot reward generators, simplifying the reward design process \citep{baumlivision, rocamonde2023vision, sontakke2024roboclip, fu2024furl}. However, while these approaches perform well across various robotic tasks, they encounter significant challenges in the context of safe autonomous driving. Unlike robotic manipulation tasks, where goals can be specified with high precision, driving objectives such as \textit{``drive safely"} are inherently abstract and context-dependent. These high-level instructions are difficult for foundation models to interpret effectively due to the complexity and variability of real-world driving scenarios \citep{ye2024lord}.

\begin{figure*}[t]
\centering
  \includegraphics[width=0.99999\textwidth]{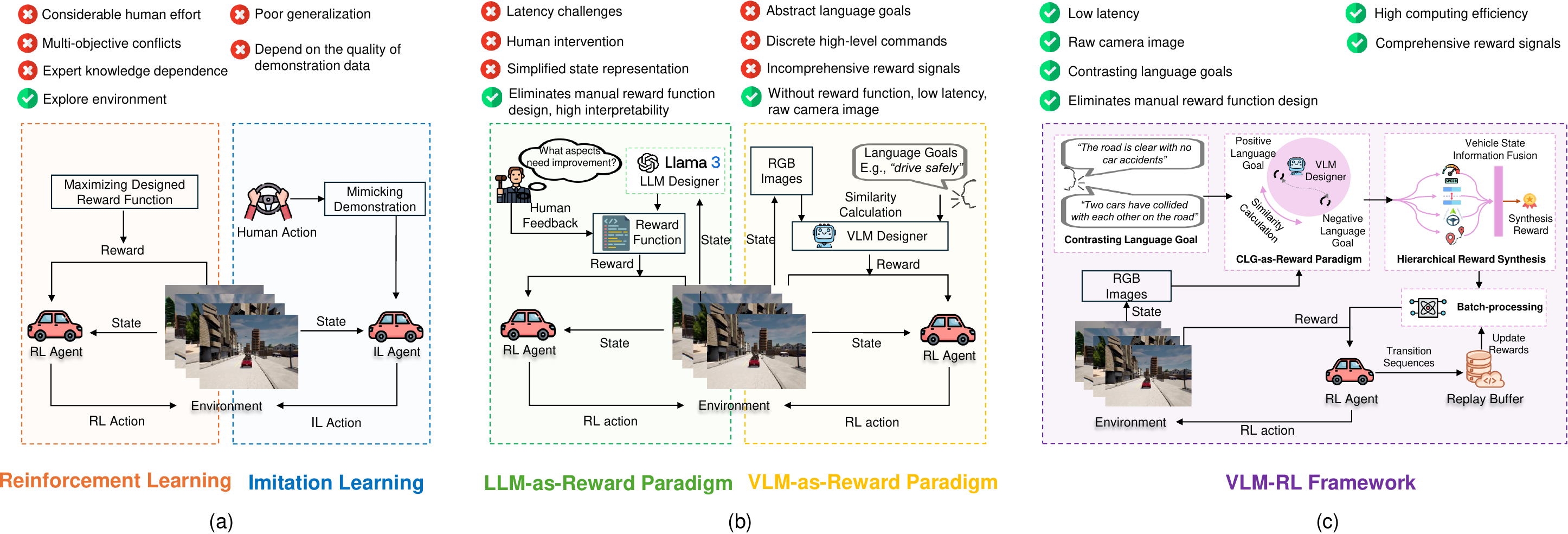}
  \caption{Comparative Overview of Reward Design Paradigms for Autonomous Driving. (a) Fundamentals and limitations of IL/RL-based methods for driving policy learning. (b) Fundamentals and limitations of foundation model-based reward design methods (i.e., LLM-as-Reward and VLM-as-Reward paradigms) for driving policy learning. (c) Our proposed VLM-RL framework, leverages VLMs to achieve a comprehensive and stable reward design for safe autonomous driving.}
  \label{fig1}
\end{figure*}

Several recent works have attempted to combine foundation models with RL for autonomous driving. Fig.~\ref{fig1} (b) illustrates two dominant paradigms for this integration: \textit{LLM-as-Reward} and \textit{VLM-as-Reward} paradigm. The first approach directly invokes LLMs to generate reward functions or codes, incorporating human feedback to iteratively refine the design. The second approach uses VLMs as reward functions, where the model evaluates the agent's state and provides immediate feedback based on the alignment with specified goals to guide behavior. Despite these advancements, existing works still face several critical limitations: (a) Most works rely solely on language descriptions to achieve desired behaviors \citep{zhou2024context, yildirim2024highwayllm}. While \cite{ye2024lord} proposes an opposite reward design, it focuses exclusively on negative scenarios and overlooks the rich semantic relationships between positive and negative driving behaviors. (b) Many approaches depend on real-time foundation model inference during deployment \citep{hazra2024revolve, han2024autoreward, zhou2024context}, introducing latency issues that are unacceptable for safety-critical driving decisions where real-time responsiveness is essential. (c) Current methods predominantly validate their effectiveness in simplified simulation environments such as HighwayEnv simulator \citep{yildirim2024highwayllm, ye2024lord, han2024autoreward, zhou2024context}. These methods use simplified state representations, which do not capture the complexity of real-world sensors (e.g., camera images) used in actual autonomous vehicles. (d) Many existing approaches generate discrete high-level commands (e.g., lane changes, acceleration) \citep{zhou2024context, ye2024lord}, which are insufficient for continuous and precise control needed in real-world vehicle operations.

Observing how humans learn new skills, we find that people typically learn more effectively through contrasting examples. For instance, when teaching someone to cook a steak, instructors often highlight both the correct and incorrect techniques: \textit{``A perfectly cooked steak has a golden-brown crust and a uniformly pink interior"} versus \textit{``If the steak turns completely dark brown and has a burnt smell, it is overcooked"}. This helps learners develop a comprehensive understanding of proper cooking techniques by recognizing both desired and undesired outcomes. Drawing inspiration from human learning, we propose a unified framework for integrating pre-trained VLMs and online RL, named \textbf{VLM-RL}, as shown in Fig.~\ref{fig1} (c). VLM-RL fundamentally rethinks how foundation models can be integrated into RL-based autonomous driving systems. We first introduce the concept of contrasting language goal (CLG) to guide RL-based safe driving tasks. Building upon the \textit{VLM-as-Reward}, a novel \textit{CLG-as-Reward} paradigm is then presented, leveraging CLG to generate more informative and context-aware rewards. To enhance learning stability, a hierarchical reward synthesis approach is adopted, combining CLG-based rewards with vehicle state information. These synthesized rewards are then integrated with standard RL for policy training. Additionally, a batch-processing technique is employed to improve computational efficiency during the training process. 

More importantly, VLM-RL implements a closed-loop end-to-end training pipeline that integrates camera-based perception with continuous control outputs, addressing a significant limitation of existing work that typically relies on simplified state representations and discrete actions. Our contributions can be summarized as follows:

\begin{itemize}
    \item We propose the VLM-RL framework, which leverages pre-trained VLMs as zero-shot reward models, eliminating the need for explicit reward engineering in RL-based safe driving tasks. \textbf{To our knowledge, VLM-RL is the first work in the autonomous driving field to unify VLMs with RL for end-to-end driving policy learning in the CARLA simulator.}
    \item We propose a novel CLG-as-Reward paradigm for reward shaping, which leverages pre-trained VLMs to generate semantic reward signals by measuring the semantic alignment between driving states and contrasting language descriptions (i.e., positive and negative language goals). 
    \item We propose a hierarchical reward synthesis approach that combines CLG-based rewards with vehicle state information to provide comprehensive and stable reward signals. This method addresses the limitations of using only coarse semantic rewards from VLMs, which can mislead policy optimization due to incomplete or imprecise behavior descriptions.
    \item We develop a batch-processing technique to ensure computational efficiency. Instead of calculating rewards immediately for every observation, batches of observations are periodically sampled from a replay buffer and processed through the pre-trained VLM. VLM-RL can be seamlessly integrated into almost any standard RL, enabling consistent performance improvements.
    \item We conducted extensive experiments in the CARLA simulator, demonstrating significant improvements in safety, efficiency, and generalization to diverse driving scenarios. Specifically, compared to state-of-the-art baselines, VLM-RL achieved 10.5\% reduction in collision rate, a 104.6\% increase in route completion rate, and successfully generalized to previously unseen scenarios without fine-tuning. 
\end{itemize}

The remainder of this paper is organized as follows. Section \ref{Related Works} reviews the related work. Section \ref{Preliminaries} introduces the preliminaries and problem formulation. Section \ref{Framework: VLM-RL} details the proposed VLM-RL framework. Section \ref{Experiments and Results} presents the experimental setup and results. Finally, Section \ref{Conclusions and Future Work} concludes the paper and outlines future research directions.

\section{Related Works}
\label{Related Works}

\subsection{Reward Design with Foundation Model}

The design of reward functions remains a fundamental challenge in RL. Recently, a new paradigm has emerged that leverages foundation models to generate reward signals for RL. \citet{kwon2023reward} first demonstrated the potential of LLMs, such as GPT-3 \citep{openai2023chatgpt}, in generating rewards for text-based tasks. Subsequent works extended this idea, showing that LLMs can generate structured code for robot training \citep{yu2023language} and Python code for various agents \citep{xietext2reward, ma2023eureka}. However, these methods often assume access to detailed environment information, which is challenging in autonomous driving. For instance, accurate data on surrounding vehicles' velocities and positions may not be available. In this work, the VLM-RL generates the reward signal directly from the visual input captured by the on-board camera, which does not require such assumptions. VLM-CaR \citep{venuto2024code} mitigates VLM query costs by breaking tasks into sub-objectives, though this is difficult for safe driving tasks. Other works use the embedding space of pre-trained VLMs, such as CLIP \citep{radford2021learning}. \cite{mahmoudieh2022zero} are the first to use fine-tuned CLIP as reward models for robotic manipulation. VLM-SR \citep{baumlivision} converts similarity-based rewards into binary rewards via thresholding, while RoboCLIP \citep{sontakke2024roboclip} compares task video embeddings to agent behavior. VLM-RM \citep{rocamonde2023vision} enhances rewards using goal-based baseline regularization, and RL-VLM-F \citep{wang2024rl} incorporates human preference labels for improved reward quality, and FuRL \citep{fu2024furl} addresses reward misalignment issues to refine reward signals further. These methods work well in robotics domains where goal states can be precisely defined and easily understood by VLMs. In contrast, autonomous driving involves inherently ambiguous language goal states that are difficult to define or verify. Additionally, robotics tasks typically involve static or controlled environments, whereas autonomous driving must deal with multiple agents and uncertain dynamic scenarios.

\subsection{Foundation Model in Autonomous Driving}

Recent breakthroughs in foundation models have inspired researchers to apply them to the field of autonomous driving, including scene understanding (e.g., DriveVLM \citep{tian2024drivevlm}, LeapAD \citep{mei2024continuously}), planning (e.g., DiLu \citep{wen2023dilu}, DriveMLM \citep{wang2023drivemlm}), scene generation (e.g., ChatScene \cite{zhang2024chatscene}, ChatSim \citep{wei2024editable}), human-vehicle interaction (e.g., DriVLMe \citep{huang2024drivlme}, Drive as you speak \citep{cui2024drive}), and end-to-end driving (e.g.,  LMDrive \citep{shao2024lmdrive}, DriveGPT4 \citep{xu2024drivegpt4}). Despite these advancements, leveraging foundation models for reward design in safe driving tasks has yet to be fully explored. LLM-RL \citep{zhou2024context} employs LLMs to intuitively shape reward functions via natural language prompts, enabling more human-like driving behavior. HighwayLLM \citep{yildirim2024highwayllm} integrates LLMs with RL to provide explainable decision-making in highway driving scenarios. REvolve \citep{hazra2024revolve} frames reward design as an evolutionary search problem, leveraging LLMs and human feedback to create human-aligned reward functions. In contrast, VLM-RL does not require human feedback. AutoReward \citep{han2024autoreward} utilizes LLMs to generate and refine reward functions through a closed-loop framework automatically. Most of the existing works heavily rely on real-time inference from foundation models, which may raise limitations such as latency issues. VLM-RL does not rely on direct model queries but instead utilizes their embedding spaces for reward computation. Perhaps the work closest to ours is LORD \citep{ye2024lord}, which uses undesired language goals to shape agent behavior. The key differences between VLM-RL and LORD are: (a) VLM-RL uses both desired and undesired goals combined with vehicle state information for richer reward signals; (b) VLM-RL uses camera-based visual inputs for more realistic perception; and (c) VLM-RL implements an end-to-end pipeline that produces continuous control outputs.

\section{Preliminaries}
\label{Preliminaries}
\subsection{Partially Observable Markov Decision Processes}
\label{Partially Observable Markov Decision Processes}
A partially observable Markov decision process (POMDP) is defined by the tuple \((S, \mathcal{A}, \theta, R, \mathcal{O}, \phi, \gamma, d_0)\), where \(S\) is the state space, \(\mathcal{A}\) is the action space, \(\theta(s' \mid s, a) : S \times S \times \mathcal{A} \to [0, 1]\) represents the transition function, \(R(s, a, s') : S \times \mathcal{A} \times S \to \mathbb{R}\) is the reward function, \(\mathcal{O}\) denotes the observation space, \(\phi(o \mid s) : S \to \Delta(\mathcal{O})\) is the observation distribution, and \(d_0(s) : S \to [0, 1]\) is the initial state distribution. At each timestep, the environment occupies a state \(s \in S\), and the agent selects an action \(a \in \mathcal{A}\). The environment transitions to a new state \(s'\) with probability \(\theta(s' \mid s, a)\). The agent then receives an observation \(o\) with probability \(\phi(o \mid s')\) and a reward \(r = R(s, a, s')\). A sequence of states and actions forms a trajectory \(\tau = (s_0, a_0, s_1, a_1, \ldots)\), where \(s_i \in S\) and \(a_i \in \mathcal{A}\). The return for a trajectory \(\tau\) is the discounted sum of rewards: \(g(\tau; R) = \sum_{t=0}^{T} \gamma^t R(s_t, a_t, s_{t+1})\). The agent's objective is to find a policy \(\pi(a \mid s)\) that maximizes the expected return \(G(\pi) = \mathbb{E}_\pi [g(\tau(\pi); R)]\).

\subsection{Vision-Language Models}
\label{Vision-Language Models}
VLMs have seen significant advancements in recent years \citep{you2024v2x}. These models are broadly defined as those capable of handling sequences of both language inputs \( l \in \mathcal{L}^{\leq n} \) and vision inputs \( i \in \mathcal{I}^{\leq m} \). In this context, \(\mathcal{L}\) represents a finite alphabet, and \(\mathcal{L}^{\leq n}\) refers to strings of length up to \(n\). Similarly, \(\mathcal{I}\) denotes the space of 2D RGB images, and \(\mathcal{I}^{\leq m}\) consists of image sequences of length up to \(m\). A notable class of pre-trained VLMs is CLIP \citep{radford2021learning}, which includes a language encoder \(\text{CLIP}_L: \mathcal{L}^{\leq n} \rightarrow \mathcal{V}\) and an image encoder \(\text{CLIP}_I: \mathcal{I} \rightarrow \mathcal{V}\), both mapping to a shared latent space \(\mathcal{V} \subseteq \mathbb{R}^k\). These encoders are trained jointly through contrastive learning on image-caption pairs. The training objective is to minimize the cosine distance between embeddings of matching pairs while maximizing it for non-matching pairs. CLIP has demonstrated strong performance in various downstream tasks and exhibits impressive zero-shot transfer capabilities \citep{rocamonde2023vision}.

\subsection{Problem Statement}
We model the task of training an autonomous driving agent as a POMDP, similar to \cite{rocamonde2023vision}. The agent's objective is to learn an optimal policy \(\pi : S \to A\) that maximizes the expected cumulative reward, expressed as \(H(\pi) = \mathbb{E} \left[ \sum_{t=0}^T \gamma^t R(s_t, a_t) \mid \pi \right]\). A key challenge in this context is to design an effective reward function $R(s, a, s')$ that guides the agent toward desirable behaviors. Traditional reward engineering requires manual specification of complex behaviors and constraints, which can be tedious, error-prone, and hard to generalize across diverse driving scenarios. Ideally, we wish to directly use VLMs to provide agents with rewards $R(s)$ to guide desired behaviors, as is done in the robotics domain. However, as mentioned earlier, using VLMs directly as rewards for autonomous driving still faces critical challenges. Our goal is to create a specialized VLM-as-Reward framework for the safe driving task to eliminate the need for explicit reward functions $G(\pi) = \mathbb{E} \left[ \sum_{t=0}^T \gamma^t R_{\text{VLM}}(s) \mid \pi \right]$.

\section{Framework: VLM-RL}
\label{Framework: VLM-RL}
In this section, we present a detailed description of VLM-RL framework. The framework addresses the fundamental challenge of reward design in autonomous driving by leveraging the semantic understanding capabilities of pre-trained VLMs (i.e., CLIP).

\begin{figure*}[t]
\centering
  \includegraphics[width=0.99999\textwidth]{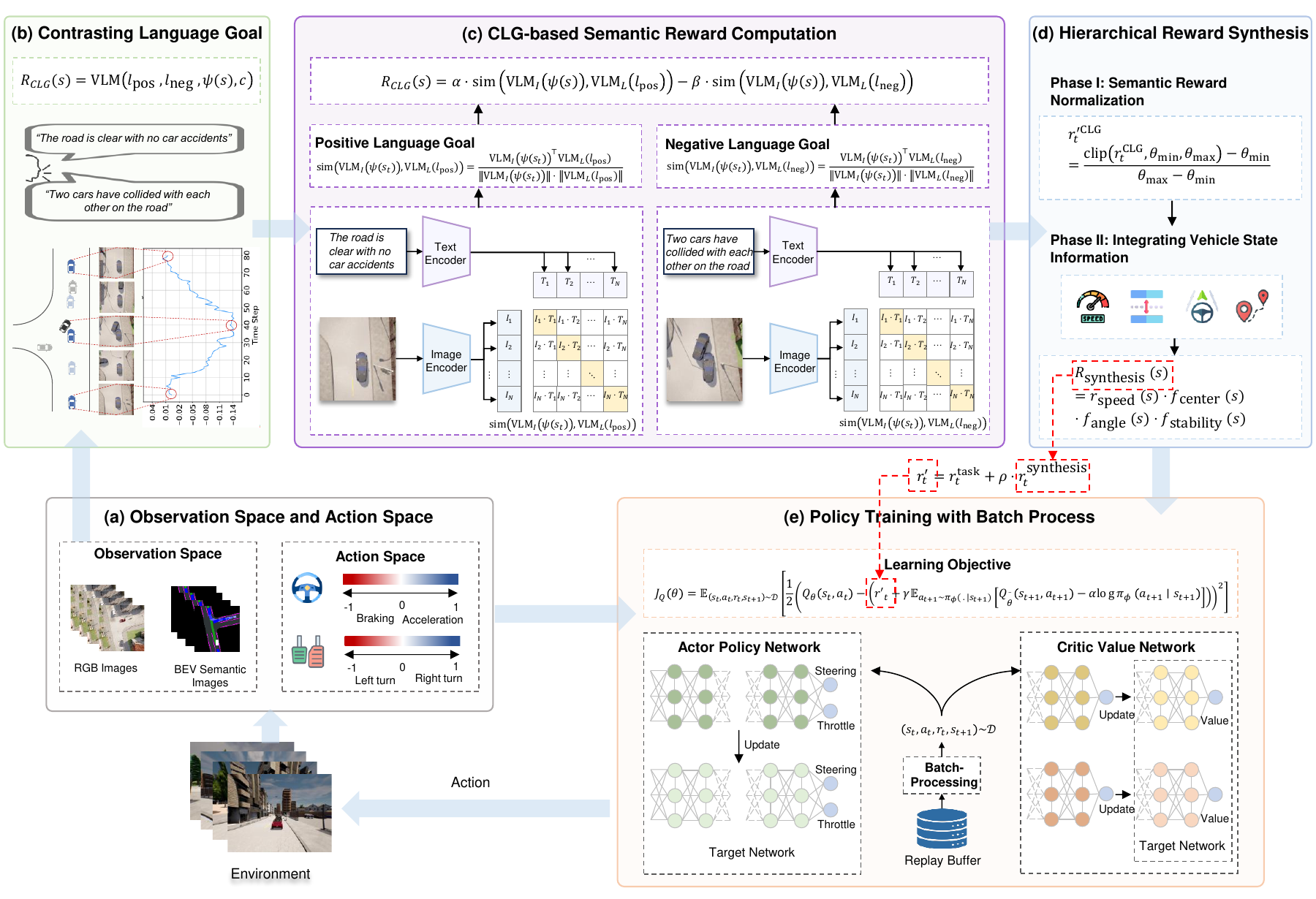}
  \caption{Architecture of the VLM-RL Framework for Autonomous Driving. (a) Observation and action spaces for policy learning; (b) Definition of CLG to provide semantic guidance; (c) CLG-based semantic reward computation using pre-trained VLMs; (d) Hierarchical reward synthesis that integrates semantic rewards with vehicle state information for comprehensive and stable reward signals; (e) Policy training with batch-processing, where SAC updates are performed using experiences stored in a replay buffer and rewards are computed asynchronously to optimize efficiency.}
  \label{fig2} 
\end{figure*}

\subsection{Overview}
The VLM-RL framework consists of four main components. First, we define the concept of CLG that describes both desired and undesired driving behaviors, providing a foundation for reward computation. Second, we utilize CLIP to compute semantic alignment between the current driving state and these contrasting language descriptions, generating semantic reward signals. Third, we develop a hierarchical reward synthesis approach that combines the semantic rewards with vehicle state information (e.g., speed, heading angle) to produce stable and comprehensive reward signals. Fourth, to optimize computational efficiency, we implement a batch-processing technique that periodically processes observations from the replay buffer rather than computing rewards in real time. Fig. \ref{fig2} illustrates the overall architecture of our framework. We describe each component in detail in the following subsections.

\begin{figure*}[t]
\centering
  \includegraphics[width=0.99999\textwidth]{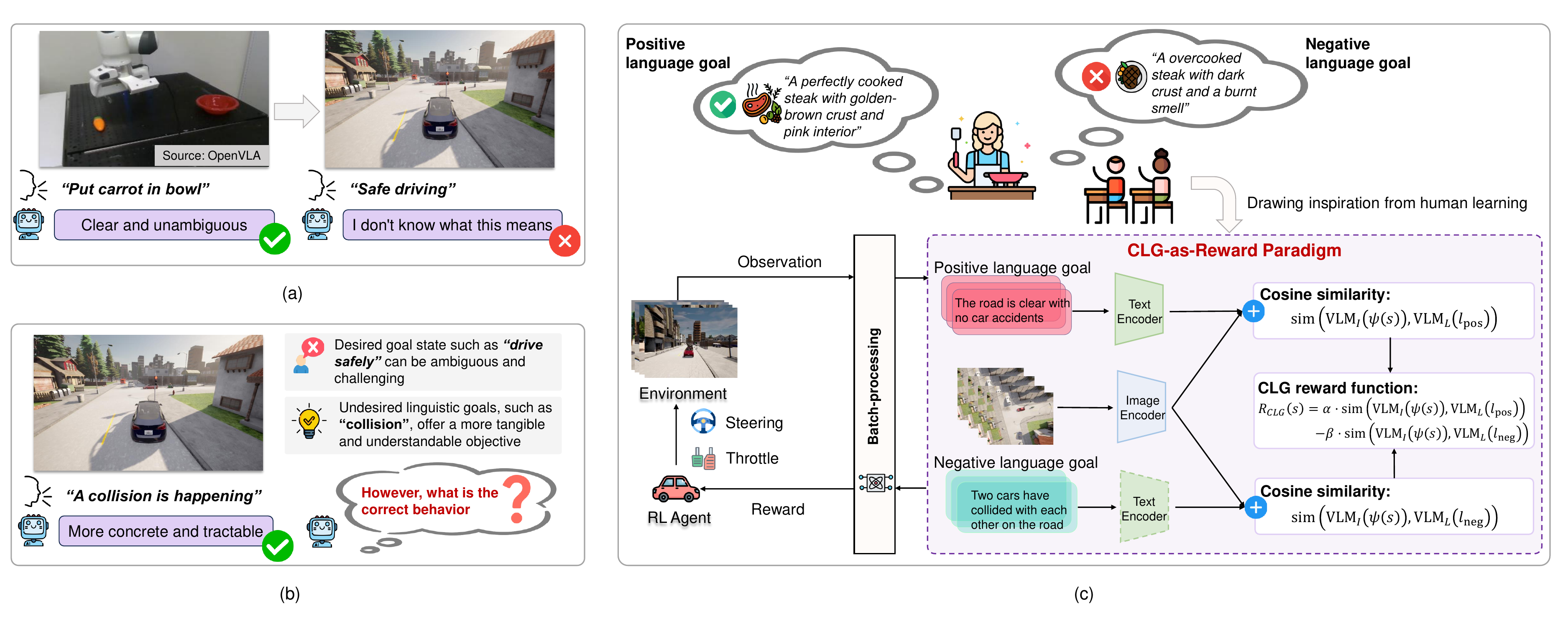}
  \caption{Conceptual comparisons of reward design paradigms. (a) Robotic manipulation tasks often feature well-defined goals (e.g., "Put carrot in bowl"), enabling VLMs to provide clear semantic rewards. (b) Existing methods that use only negative goals (e.g., "two cars have collided") focus on avoidance but lack positive guidance. (c) Our CLG-as-Reward paradigm integrates both positive and negative goals, allowing VLM-RL to deliver more informative semantic guidance for safer, more generalizable driving.}
  \label{fig3} 
\end{figure*}

\subsection{Contrasting Language Goal Definition}

Recent advances in robotics have demonstrated remarkable success in utilizing pre-trained VLMs as zero-shot reward models across diverse tasks \citep{ sontakke2024roboclip}. Given a task $\mathcal{T}$ and its natural language description $l \in \mathcal{L}^{\leq n}$, the fundamental approach involves leveraging VLMs to generate reward signals that guide the agent toward desired behaviors. This can be formally expressed as \citep{rocamonde2023vision}
\begin{equation}
R_{\text{VLM}}(s) = \text{VLM}(l, \psi(s), c)
\label{eq1}
\end{equation}
where $c \in \mathcal{L}^{\leq n}$ is an optional context that may include additional information or constraints. In this formulation, the VLM takes the language goal $l$, the current observation $\psi(s)$, and optional context $c$, and outputs a reward signal. 

In robotics, the success of this formulation relies on the ability to describe tasks and goal states with precise language. For example, in manipulation tasks (Fig. \ref{fig3} (a)), goals such as \textit{``Put carrot in bowl"} are clear and unambiguous, allowing VLMs to effectively measure progress by comparing state-goal relationships in their embedding space $\mathcal{V} \subseteq \mathbb{R}^k$. In contrast, autonomous driving poses unique challenges, as the goal of \textit{``Safe driving"} encompasses a wide range of acceptable behaviors and states. This abstract objective makes it difficult to establish clear semantic comparisons between the current vehicle state and the goal. While LORD \citep{ye2024lord} addresses this by using opposite language goals (Fig.\ref{fig3} (b)), this approach offers limited guidance by focusing only on states to avoid.

Drawing inspiration from human learning, where people often learn more effectively through contrasting goals, as exemplified by the steak-cooking scenario mentioned earlier, we propose using VLMs to generate semantic reward signals by aligning driving states with contrasting language descriptions (Fig.\ref{fig3} (c)). Specifically, we introduce the concept of CLG, which is defined as pairs of positive and negative descriptions that encapsulate desired and undesired driving behaviors. 

\begin{definition}[Contrasting Language Goal]
Given a driving task $\mathcal{T}$, we define contrasting language goal as a pair $(l_{\text{pos}}, l_{\text{neg}}) \in \mathcal{L}^{\leq n} \times \mathcal{L}^{\leq n}$, where $l_{\text{pos}}, l_{\text{neg}} \in \mathcal{L}^{\leq n}$ denote positive and negative language goals respectively. Positive goals describe desired outcomes (e.g., ``the road is clear with no car accidents"), while negative goals specify undesired scenarios (e.g., ``two cars have collided with each other on the road"). Based on Eq. (\ref{eq1}), the reward function of CLG is defined as
\begin{equation}
R_{\text{CLG}}(s) = \text{VLM}(l_{\text{pos}}, l_{\text{neg}}, \psi(s), c)
\label{eq2}
\end{equation}
\label{definition1}
\end{definition}

Specifically, we hope that the positive component will guide the agent toward a desirable state while the negative component will prevent the agent from entering an undesirable state. The ultimate goal is to provide more informative reward signals by encouraging desirable behaviors and punishing undesirable behaviors. Section \ref{CLG as Reward Paradigm} provides a detailed implementation of this idea.

\subsection{CLG-based Semantic Reward Computation}

\subsubsection{VLM as Rewards Revisited}

Safe driving tasks typically rely on sparse reward signals. In this setting, at each timestep $t$, given an observation $\psi(s_t)$ derived from state $s_t$, the agent executes an action $a_t \sim \pi_\theta(a_t | s_t)$ according to its policy $\pi_\theta$. The environment then provides a sparse task reward $r_t^{\text{task}}$, typically defined as $r_t^{\text{task}} = \delta_{\text{success}}$, meaning a reward of 1 is received only upon task success and otherwise the reward is 0 \citep{cao2022trustworthy}. Such sparse rewards present substantial challenges for RL training, as they provide limited learning signals across the majority of the state space. A common approach is to manually design dense reward signals by combining metrics such as speed and distance to waypoints, either through simple summation \citep{wang2023efficient} or weighted aggregation \citep{chen2022interpretable}. It is time-consuming, demands expertise, and may lead to conflicting sub-goals, resulting in suboptimal policies.

Building upon the general VLM reward formulation introduced in Eq. (\ref{eq1}), recent works \citep{rocamonde2023vision, baumlivision, fu2024furl} have proposed augmenting these sparse task rewards with VLM-generated rewards $r_t^{\text{VLM}}$. This hybrid reward formulation can be expressed as
\begin{equation}
r_t = r_t^{\text{task}} + \rho \cdot r_t^{\text{VLM}}
\label{eq3}
\end{equation}
where $\rho > 0$ is a weighting parameter that balances the relative importance of the VLM-generated reward against the sparse task reward.

\begin{definition} [VLM-as-Reward Paradigm]
Given the vision encoder $\text{VLM}_I: \mathcal{O} \rightarrow \mathcal{V}$ and language encoder $\text{VLM}_L: \mathcal{L}^{\leq n} \rightarrow \mathcal{V}$ that map into the same latent space $\mathcal{V} \subseteq \mathbb{R}^k$, and a sequence of state-action transitions $\{ s_t, a_t, r_t, s_{t+1} \}_{t=1}^{T}$, the VLM reward is defined as
\begin{equation}
\label{eq4}
r_t^{\text{VLM}} = D\left(\text{VLM}_I(\psi(s_t)), \text{VLM}_L(l)\right)
\end{equation}
where $D: \mathcal{V} \times \mathcal{V} \rightarrow \mathbb{R}$ is a distance metric between the embedded representations. Most works adopt CLIP \citep{radford2021learning} as the pre-trained VLM, employing cosine similarity as the distance metric \citep{rocamonde2023vision, sontakke2024roboclip}
\begin{equation}
\label{eq5}
r_t^{\text{VLM}} = \text{sim}\left(\text{VLM}_I(\psi(s_t)), \text{VLM}_L(l)\right) = \frac{\text{VLM}_I(\psi(s_t))^\top \text{VLM}_L(l)}
{\Vert \text{VLM}_I(\psi(s_t)) \Vert \cdot \Vert \text{VLM}_L(l) \Vert}
\end{equation}
where $\text{sim}(\cdot, \cdot)$ denotes the cosine similarity between embeddings, and $\text{VLM}_I$ and $\text{VLM}_L$ are the vision and language encoders, respectively. In this case, we don't need the context $c$. Here, the language goals $l$ typically express desired behaviors. The computation process involves three steps: (a) The observation $\psi(s)$ is processed through the $\text{VLM}_I$ to obtain a state embedding in the shared latent space $\mathcal{V}$. (b) The language goals are encoded via $\text{VLM}_L$ to obtain their respective embeddings in the same space. (c) The reward is computed as the cosine similarity between the state embedding and the goal embedding.
\label{definition2}
\end{definition} 

While this formulation works well for robotics tasks with concrete goals, autonomous driving lacks such well-defined objectives. Recent work LORD \citep{ye2024lord} proposes using opposite language goals for safe driving tasks. Their key insight is that undesired states (e.g., collisions) are often more concrete and easier to specify than desired states. By defining the reward function $R_{\text{LORD}}(s)$ as
\begin{equation}
\label{eq6}
R_{\text{LORD}}(s) = 1 - \text{sim}\left(\text{VLM}_I(\psi(s)), \text{VLM}_L(l_{\text{neg}})\right)
\end{equation}

The formulation aims to minimize the similarity between the current observation and the negative goal, thereby penalizing undesirable behaviors. By focusing solely on avoiding negative states, this approach may lack positive guidance for desirable behaviors, potentially limiting learning efficiency.

\subsubsection{CLG as Reward Paradigm}
\label{CLG as Reward Paradigm}
Following the VLM-as-Reward paradigm in Definition \ref{definition2}, we propose a novel CLG-as-Reward paradigm specifically designed for safe driving tasks.

\begin{definition} [CLG-as-Reward Paradigm]
Given the CLG $(l_{\text{pos}}, l_{\text{neg}})$ introduced in Definition \ref{definition1}, we define the CLG reward function as
\begin{equation}
\label{eq7}
R_{\text{CLG}}(s) = \ \alpha \cdot \text{sim}\bigl(\text{VLM}_I(\psi(s)), \text{VLM}_L(l_{\text{pos}})\bigr) - \beta \cdot \text{sim}\bigl(\text{VLM}_I(\psi(s)), \text{VLM}_L(l_{\text{neg}})\bigr)
\end{equation}
where $\alpha, \beta > 0$ are weighting factors satisfying $\alpha + \beta = 1$. If $\alpha > \beta$, the agent focuses more on achieving the positive goal, while if $\alpha < \beta$, the agent emphasizes steering clear of negative outcomes. For simplicity, in this work, we set $\alpha = \beta = 0.5$, i.e., the two goals are equally prioritized. $\text{sim}(\cdot, \cdot)$ denotes the cosine similarity between embeddings as defined in Eq. (\ref{eq5}).
\label{definition3}
\end{definition}

This formulation ensures that the agent is guided by both the positive and negative goals simultaneously. In other words, it encourages the agent to seek states similar to the positive goal while avoiding states similar to the negative goal, offering more informative guidance for policy learning. The following Thm. \ref{theorem:clg_advantage} formally establishes the effectiveness of the CLG-as-Reward paradigm.

\begin{theorem}[Effectiveness of CLG-as-Reward Paradigm]
\label{theorem:clg_advantage}
Assume the VLM embeddings accurately capture the semantic content of observations and language goals. Under this assumption, optimizing the policy $\pi$ to maximize the CLG reward $R_{\text{CLG}}$ defined in Eq.~(\ref{eq7}) encourages the agent to simultaneously increase similarity to the positive goal and decrease similarity to the negative goal. As a result, the learned policy not only achieves the desired driving behaviors but also avoids undesirable ones. 
\end{theorem}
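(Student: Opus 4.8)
The statement is essentially a structural/monotonicity claim about the reward in Eq.~(\ref{eq7}), so the plan is to make the informal phrases (``encourages,'' ``simultaneously increase/decrease'') precise in terms of the gradient and monotone dependence of $R_{\text{CLG}}$ on the two cosine-similarity terms, and then to transfer that statement to behavior via the semantic-faithfulness assumption. First I would set up notation: write $u(s)=\text{VLM}_I(\psi(s))/\Vert\text{VLM}_I(\psi(s))\Vert$, $v_{\text{pos}}=\text{VLM}_L(l_{\text{pos}})/\Vert\text{VLM}_L(l_{\text{pos}})\Vert$, $v_{\text{neg}}=\text{VLM}_L(l_{\text{neg}})/\Vert\text{VLM}_L(l_{\text{neg}})\Vert$, and define $p(s)=\langle u(s),v_{\text{pos}}\rangle$, $n(s)=\langle u(s),v_{\text{neg}}\rangle$, so that $R_{\text{CLG}}(s)=\alpha\, p(s)-\beta\, n(s)$ with $\alpha,\beta>0$.

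Second, I would establish the separable-monotonicity lemma: since $R_{\text{CLG}}$ is an affine function of $(p,n)$ with a strictly positive coefficient on $p$ and strictly negative on $n$, for any two states $s,s'$ with $p(s')\ge p(s)$ and $n(s')\le n(s)$ (one strict) we get $R_{\text{CLG}}(s')>R_{\text{CLG}}(s)$; conversely any state that is reward-maximizing within a reachable set cannot be Pareto-dominated in the $(p,-n)$ order. Then I would bring in the policy: using the POMDP return $G(\pi)=\mathbb{E}_\pi[\sum_t\gamma^t R_{\text{CLG}}(s_t)]$ from Section~\ref{Preliminaries}, I would argue that if $\pi^\star$ maximizes $G$, then along its induced occupancy measure the per-step expected value $\mathbb{E}[R_{\text{CLG}}(s)]=\alpha\,\mathbb{E}[p(s)]-\beta\,\mathbb{E}[n(s)]$ is maximal, and by the lemma a policy that strictly increased visitation of high-$p$, low-$n$ states would strictly improve $G$ — so at the optimum the policy has, in the Pareto sense, jointly pushed up similarity to $l_{\text{pos}}$ and down similarity to $l_{\text{neg}}$ relative to any dominated alternative (e.g.\ relative to a reward-agnostic or purely negative-goal policy such as $R_{\text{LORD}}$ in Eq.~(\ref{eq6})).

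Third, I would close the loop with the assumption that the VLM embeddings accurately capture semantic content: formalize it as a pair of monotone links — higher $p(s)$ corresponds to the scene $\psi(s)$ being semantically closer to ``desired driving behavior'' and higher $n(s)$ to being closer to ``collision/undesired behavior'' — and then conclude that maximizing $p$ while minimizing $n$ is, under this correspondence, exactly achieving desired behaviors while avoiding undesired ones. I would also note the mild regularity needed (e.g.\ that such high-$p$/low-$n$ states are reachable under the environment dynamics, i.e.\ the positive and negative goals are not contradictory on the reachable state set) so the optimum is not vacuous, and remark that the equal-weight choice $\alpha=\beta=1/2$ is just the symmetric instance of the general monotone argument.

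The main obstacle, and the part I expect to require the most care, is the middle step: turning a statement about the \emph{reward} (a pointwise, separable monotonicity fact that is nearly immediate) into a statement about the \emph{learned policy} without over-claiming. The honest version is a Pareto/monotone-improvement argument — ``the optimal policy is not dominated, and strictly beats any policy that ignores one of the two goals'' — rather than a literal guarantee that each individual state visited has higher $p$ and lower $n$ than under some baseline, since the discounted-return objective only controls the occupancy-weighted averages and the two similarities can trade off against each other along a trajectory. I would therefore state the conclusion at the level of expected (occupancy-averaged) similarities and of dominance over the positive-only and negative-only reward designs, and flag the reachability/non-contradiction condition as the hypothesis that prevents the claim from being vacuous.
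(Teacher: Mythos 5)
Your proposal is correct and follows the same core line as the paper: both rest on the observation that $R_{\text{CLG}}$ in Eq.~(\ref{eq7}) is affine in the two cosine similarities, with a strictly positive coefficient on the positive-goal term and a strictly negative one on the negative-goal term, so that states semantically closer to $l_{\text{pos}}$ and farther from $l_{\text{neg}}$ receive strictly higher reward. The paper's proof essentially stops there: it writes down the return $J(\pi)$, makes the pairwise state comparison of Eq.~(\ref{eq9}) (which, note, compares \emph{unweighted} similarity differences and therefore implicitly relies on the choice $\alpha=\beta$; your weighted version handles general $\alpha,\beta$), and then simply asserts that the agent ``will tend to choose actions leading to'' the higher-reward state. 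Your middle step --- passing to the occupancy measure, phrasing the conclusion as Pareto non-dominance of the optimal policy in the $(p,-n)$ order, and explicitly restricting the guarantee to occupancy-weighted averages rather than per-state claims --- is a genuine tightening of exactly the point the paper leaves informal: the discounted return only controls averages, and the two similarities can trade off along a trajectory. Likewise, your reachability/non-contradiction hypothesis (that high-$p$, low-$n$ states exist in the reachable set, so the optimum is not vacuous) is a condition the paper silently assumes. Neither argument can deliver more than the theorem's informal wording (``encourages'') demands, but your version makes precise the policy-level half of the claim that the paper's proof hand-waves, at the cost of a slightly heavier setup.
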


\begin{proof}
The agent aims to maximize the expected discounted return:
\begin{equation}
J(\pi) = \mathbb{E}_{\pi}\left[ \sum_{t=0}^{T} \gamma^t R_{\text{CLG}}(s_t) \right]
\label{eq8}
\end{equation}

An increase in $\text{sim}(\text{VLM}_I(\psi(s_t)), \text{VLM}_L(l_{\text{pos}}))$ indicates that the current state $s_t$ more closely aligns with the positive goal. Since $R_{\text{CLG}}$ adds a term proportional to this similarity, states that resemble the positive goal yield higher rewards. Conversely, $R_{\text{CLG}}$ subtracts a term proportional to $\text{sim}(\text{VLM}_I(\psi(s_t)), \text{VLM}_L(l_{\text{neg}}))$, meaning states similar to the negative goal reduce the reward. Thus, maximizing $R_{\text{CLG}}$ naturally pushes the agent toward states that are semantically closer to the positive goal and farther from the negative goal.

Formally, let $s_t$ and $s'_t$ be two potential subsequent states with embeddings $\mathbf{v}_t = \text{VLM}_I(\psi(s_t))$ and $\mathbf{v}'_t = \text{VLM}_I(\psi(s'_t))$. Define $\mathbf{v}_{\text{pos}} = \text{VLM}_L(l_{\text{pos}})$ and $\mathbf{v}_{\text{neg}} = \text{VLM}_L(l_{\text{neg}})$. If
\begin{equation}
\text{sim}(\mathbf{v}_t, \mathbf{v}_{\text{pos}}) - \text{sim}(\mathbf{v}_t, \mathbf{v}_{\text{neg}})
> \text{sim}(\mathbf{v}'_t, \mathbf{v}_{\text{pos}}) - \text{sim}(\mathbf{v}'_t, \mathbf{v}_{\text{neg}})
\label{eq9}
\end{equation}
then $R_{\text{CLG}}(s_t) > R_{\text{CLG}}(s'_t)$. The agent, through repeated interaction and policy updates, will tend to choose actions leading to $s_t$ rather than $s'_t$, as $s_t$ yields higher expected returns.

\end{proof}

Over time, this consistent preference ensures the learned policy converges toward behaviors that improve positive similarity while reducing negative similarity. Incorporating both positive and negative goals thus provides a more informative learning signal than using either one alone, resulting in superior policy learning. Furthermore, as demonstrated theoretically in \ref{appendices1}, the CLG-as-Reward paradigm enhances the robustness of the learned policy, making it more resilient to uncertainty and adversarial perturbations.

\subsection{Hierarchical Reward Synthesis}

In this work, we follow the standard VLM-as-Reward paradigm, i.e., using only a language description of the task \citep{rocamonde2023vision, sontakke2024roboclip, wang2024rl}. Yet, as noted by \cite{fu2024furl}, while zero-shot VLMs are effective in capturing coarse semantics, they often fall short in accurately representing fine-grained details. Furthermore, a single language description cannot comprehensively capture all the nuances of desired driving behaviors. As a result, relying solely on semantic rewards $R_{\text{CLG}}$ could potentially mislead policy optimization in complex driving scenarios. Previous work has explored various strategies to address this issue: LAMP \citep{adeniji2023language} uses VLM-based reward for behavior pre-training, ZSRM \citep{mahmoudieh2022zero} retrains VLMs with task-specific datasets, and FuRL \citep{fu2024furl} fine-tunes VLM representations and uses relay RL technique. 

In contrast to these approaches, we aim to preserve the zero-shot capability of VLMs by integrating vehicle state information, which is readily available from on-board sensors, to generate more stable and comprehensive reward signals. In detail, We propose a hierarchical reward synthesis approach consisting of two phases: (a) generating normalized semantic rewards from VLMs and (b) combining these semantic rewards with vehicle state information to produce the synthesis reward signal.

\textbf{Phase I: Semantic Reward Normalization.} First, we compute the semantic rewards $r_t^{\text{CLG}}$ by processing batches of observation frames through the CLIP. To ensure stability, we normalize the similarity scores to the range $[0, 1]$:
\begin{equation}
r'{_t^{\text{CLG}}} = \frac{\text{clip}(r_t^{\text{CLG}}, \theta_{\text{min}}, \theta_{\text{max}}) - \theta_{\text{min}}}{\theta_{\text{max}} - \theta_{\text{min}}}
\label{eq10}
\end{equation}
where $\theta_{\text{min}}$ and $\theta_{\text{max}}$ are empirically set to -0.03 and 0.0, respectively, to avoid extreme values and ensure consistent scaling. $\text{clip}(x, a, b)$ constrains $x$ within the interval $[a, b]$.

\textbf{Phase II: Integrating Vehicle State Information.} We incorporate vehicle state information to produce the synthesis reward signal. This step leverages on-board sensor data to ensure the reward captures realistic driving behavior and safety constraints.

\begin{definition}[Synthesis Reward Function]
The synthesis reward function $R_{\text{synthesis}}: \mathcal{S} \rightarrow \mathbb{R}$ is computed by combining the normalized semantic reward $r'{_t^{\text{CLG}}}$ with vehicle state information. Specifically: 
\begin{equation} 
R_{\text{synthesis}}(s) = r_{\text{speed}}(s) \cdot f_{\text{center}}(s) \cdot f_{\text{angle}}(s) \cdot f_{\text{stability}}(s)
\label{eq11}
\end{equation} 
where $r_{\text{speed}}$ modulates speed alignment, computed as $r_{\text{speed}} = 1 - \frac{|v - v_{\text{target}}|}{v_{\text{max}}}$ with $v_{\text{target}} = r'{_t^{\text{CLG}}} \cdot v_{\text{max}}$. $f_{\text{center}}(s)$ evaluates the vehicle's lateral position relative to the lane center. $f_{\text{angle}}(s)$ measures the vehicle's orientation with respect to the road direction. $f_{\text{stability}}(s)$  accounts for the consistency of the vehicle's lateral position relative to the lane center. Each term is bounded within [0,1].
\end{definition}

Compared to traditional weighted-sum reward designs \citep{chen2022interpretable, wang2023efficient}, this multiplicative formulation naturally captures the interdependence of safety criteria without extensive parameter tuning. It yields an interpretable, stable, and easily implementable reward structure that leverages both semantic guidance from the VLM and actionable, high-fidelity vehicle state signals. The workflow of the hierarchical reward synthesis is shown in 
\ref{appendices2} as pseudocode. We also demonstrate the convergence and stability of the synthesis reward function in \ref{appendices3} and \ref{appendices4}.

Now, by combining the synthesis  reward function in Eq.~(\ref{eq11}) with Eq.~(\ref{eq3}), we obtain the final reward function for the VLM-RL framework:
\begin{equation}
r'_t = r_t^{\text{task}} + \rho \cdot r_t^{\text{synthesis}}
\label{eq12}
\end{equation}

This formulation allows the agent to benefit from both explicit task success signals and dense, context-aware rewards. The sparse task reward $r_t^{\text{task}}$ ensures that the agent remains goal-oriented, while the synthesis reward $r_t^{\text{synthesis}}$ provides continuous feedback based on both high-level semantic understanding and low-level vehicle dynamics.

\subsection{Policy Training with Batch-Processing}

We adopt the soft actor-critic (SAC) algorithm \citep{haarnoja2018soft} as the backbone RL framework, due to its superior sample efficiency and stability in continuous control tasks. The SAC algorithm aims to maximize the expected return while encouraging exploration through entropy regularization. The objective can be written as:
\begin{equation} 
J(\pi_\phi) = \mathbb{E}{\pi_\phi} \left[ \sum_{t=0}^{T} \gamma^t \bigl( R(s_t,a_t) + \alpha \mathcal{H}(\pi_\phi(\cdot|s_t)) \bigr) \right]
\label{eq13}
\end{equation}
where $\gamma \in [0,1)$ is the discount factor, $\alpha > 0$ is the entropy temperature parameter controlling the trade-off between return and entropy maximization, and $\mathcal{H}(\pi_\phi(\cdot|s_t))$ is the entropy of the policy at state $s_t$.

To update the policy parameters $\phi$, SAC minimizes the following objective:
\begin{equation} 
\label{eq14}
J_\pi(\phi) = \mathbb{E}_{s_t \sim \mathcal{D}} \left[ \mathbb{E}_{a_t \sim \pi_\phi(\cdot|s_t)} \left( -Q_\theta(s_t,a_t) + \alpha \log \pi_\phi(a_t|s_t) \right) \right]
\end{equation}
where $\mathcal{D}$ is the replay buffer, and $Q_\theta$ is the Q-function parameterized by $\theta$.

The Q-function parameters $\theta$ are updated by minimizing the soft Bellman residual:
\begin{equation}
\label{eq15}
J_Q(\theta) = \mathbb{E}_{(s_t,a_t,r_t,s_{t+1}) \sim \mathcal{D}} \left[ \frac{1}{2} \biggl( Q_\theta(s_t,a_t) - \bigl( r_t + \gamma \mathbb{E}_{a_{t+1}\sim\pi_\phi(\cdot|s_{t+1})} [Q_{\bar{\theta}}(s_{t+1},a_{t+1}) - \alpha \log \pi_\phi(a_{t+1}|s_{t+1})] \bigr) \biggr)^2 \right] 
\end{equation}
where $Q_{\bar{\theta}}$ is a target Q-function with periodically updated parameters $\bar{\theta}$.

Here, we replace the standard reward $r_t$ in the soft Bellman residual with $r'_t$ defined in Eq. (\ref{eq12}):
\begin{equation}
\label{eq16}
J_Q(\theta) = \mathbb{E}_{(s_t,a_t,r_t,s_{t+1}) \sim \mathcal{D}} \left[ \frac{1}{2} \biggl( Q_\theta(s_t,a_t) - \bigl( r'_t + \gamma \mathbb{E}_{a_{t+1}\sim\pi_\phi(\cdot|s_{t+1})} [Q_{\bar{\theta}}(s_{t+1},a_{t+1}) - \alpha \log \pi_\phi(a_{t+1}|s_{t+1})] \bigr) \biggr)^2 \right] 
\end{equation}

During training, the critic networks learn to estimate future returns based on Eq. (\ref{eq16}), while the policy network learns to maximize these returns through the standard SAC policy gradient updates.

To address the computational bottleneck of CLIP inference, we develop a batch-processing technique. During environment interaction, tuples of $(o_t, s_t, a_t, r_t, o_{t+1}, s_{t+1})$ are stored in a replay buffer. Here, $o_t$ represents the raw observation image required for CLIP processing, and $s_t$ contains the processed state information for policy learning. At predefined intervals, we sample a batch of observations from the replay buffer and process them through the CLIP encoder. The CLIP embeddings of the CLG ($l_{\text{pos}}$ and $l_{\text{neg}}$) are computed only once at the start of training, as they remain constant. We compute the synthesized rewards according to Eq.(\ref{eq11}), which then are used to update the stored transitions in the replay buffer. The SAC algorithm subsequently samples these updated transitions via its standard update procedure for policy optimization. This approach effectively decouples the computationally expensive reward computation from the main RL training loop, enabling the agent to continue learning while rewards are computed asynchronously. The complete training procedure is outlined in \ref{appendices5}.

\section{Experiments and Results}
\label{Experiments and Results}

\subsection{Experiment Setting}
\subsubsection{RL Setups}
The RL agent takes three types of inputs: 
(1) a bird's-eye view (BEV) semantic segmentation image that captures the surrounding environment, including drivable areas, lane boundaries, and other traffic participants, as illustrated in Fig. \ref{bev} (c)-(e). This provides crucial spatial information for navigation and obstacle avoidance. 
(2) ego state information consisting of the current steering angle, throttle value, and vehicle speed. These values reflect the vehicle's dynamic state and are essential for maintaining smooth control. 
(3) future navigation information represented by the next 15 waypoints along the planned route. Each waypoint is defined by its $(x,y)$ coordinates relative to the vehicle's current position, helping the agent understand and follow the desired trajectory. 

The action space is designed as a continuous 2-dimensional space $[-1,1]^2$, where each dimension controls different aspects of vehicle motion. The first dimension corresponds to the steering angle, with values in $[-1,1]$ representing the full range of steering control. Specifically, $-1$ indicates maximum left turn, $0$ represents straight ahead, and $+1$ indicates maximum right turn. The second dimension combines throttle and brake control in a single value range $[-1,1]$. When this value is positive ($[0,1]$), it directly maps to the throttle intensity, with $1$ representing full throttle. Conversely, when the value is negative, its absolute value maps to brake intensity, where $-1$ corresponds to full brake.
An episode is terminated when any of the following conditions are met: (a) collision with any obstacles, vehicles, or pedestrians, (b) deviation from the road center line by more than 3 meters, or (c) vehicle speed remains below 1 km/h for more than 90 consecutive seconds, indicating the agent is stuck or unable to progress. These termination conditions are designed to enforce safe driving behavior and ensure efficient navigation progress.

We build our implementation upon the Stable-Baselines3 library \citep{raffin2021stable}, which provides reliable and well-tested implementations of modern RL algorithms. 
Stable-Baselines3 offers a modular design and stable performance, allowing us to focus on extending the core algorithms rather than implementing them from scratch. 
Specifically, we extend the standard implementations of SAC and PPO to incorporate our CLG-based and hierarchical reward computation during the training process. 
The policy network architecture is specifically designed for processing heterogeneous input types: we employ a 6-layer CNN to extract features from the BEV semantic segmentation images, while using MLPs to process both the ego state information and future navigation waypoints. 
These processed features are then concatenated before being fed into the final policy head for action prediction.

\subsubsection{Driving Scenarios}

We train all models in CARLA's Town 2 map to ensure a fair comparison and evaluate the effectiveness of our approach and all baseline models. 
As shown in Fig. \ref{towns} (b), this town presents a typical European-style urban layout with a variety of challenging driving scenarios. 
It consists of several interconnected areas including a residential district, a commercial zone with single-lane roads, and complex intersections controlled by traffic lights. 
The compact nature of Town 2 makes it particularly suitable for evaluation, as it provides diverse driving conditions within a manageable scale, including both straight roads and curved segments, multiple T-junctions, and different types of lane markings and road geometries. These features create challenging scenarios for assessing both basic driving capabilities and complex decision-making behaviors.
In Section \ref{sec5.7.1}, we further evaluate the generalization ability of our approach in Towns 1, 3, 4, and 5, as shown in Figs. \ref{towns} (a), (c), (d), and (e), respectively.
Unless otherwise specified, all results are reported based on experiments conducted in Town 2.

To create a more realistic and challenging environment, we populate the town with 20 vehicles running in autopilot mode. 
These vehicles are randomly spawned across the map and operate using CARLA's built-in traffic manager, which enables them to follow traffic rules, respond to traffic lights, and perform basic collision avoidance. 
This dynamic traffic flow significantly increases the complexity of the learning task for our RL agent, as it must now handle various interactive scenarios such as car following, overtaking, and yielding to other vehicles. 
The presence of multiple moving vehicles not only makes the environment more similar to real-world urban driving conditions but also challenges the RL agent to develop more robust and adaptive driving strategies.

\begin{figure}
  \centerline{\includegraphics[width=0.993\textwidth]{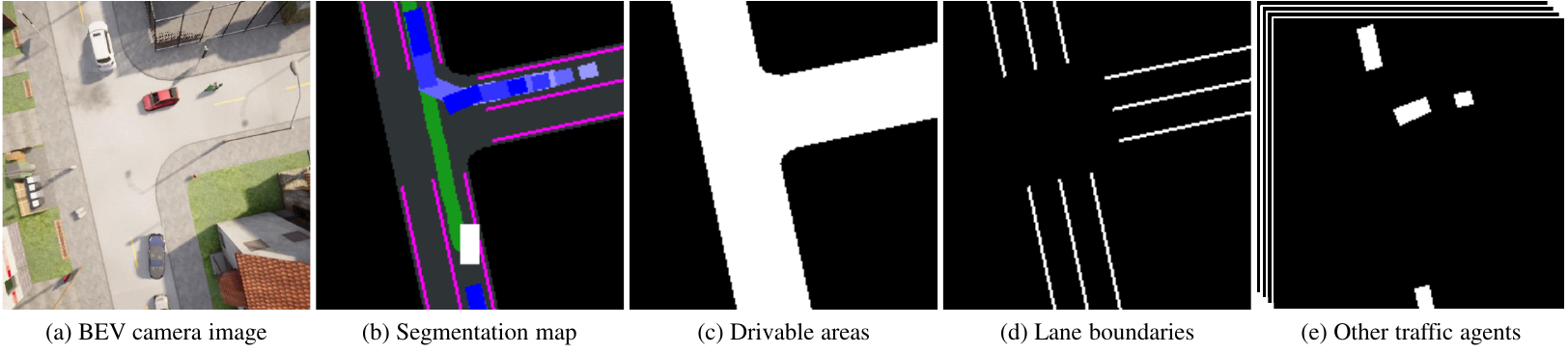}}
  \caption{Bird's eye view of RL agent's surrounding environment, where the purple vehicle in (a) and the white box in (b) represents the RL agent.}
  \label{bev}
\end{figure}

\begin{figure}
  \centerline{\includegraphics[width=0.993\textwidth]{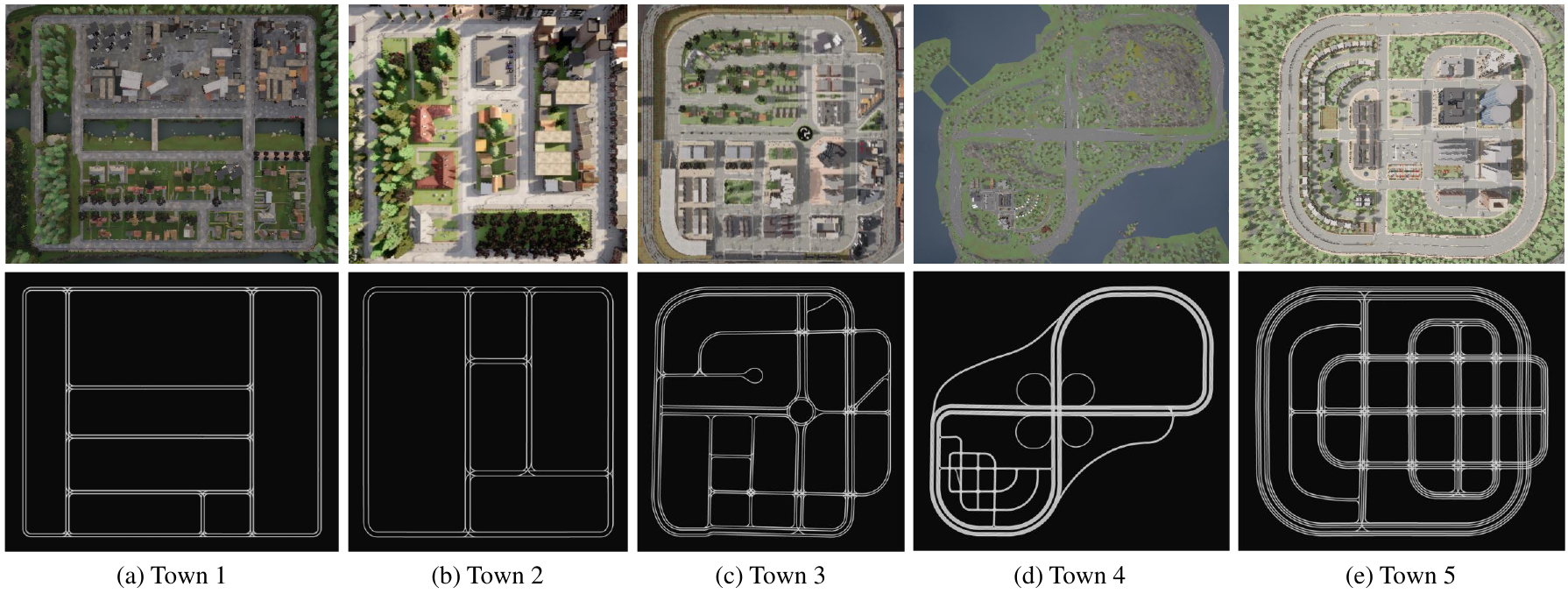}}
  \caption{Bird's eye view of Towns and their drivable routes in CARLA.}
  \label{towns}
\end{figure}

\subsubsection{Navigation Routes}
We dynamically assign navigation routes to the RL agent during training and evaluation. 
At each reset, we utilize the 101 predefined spawn points available on the drivable routes in Fig. \ref{towns} (b) as potential starting and destination locations. 
Specifically, we randomly select two distinct spawn points to serve as the start and end points, then employ the A* search algorithm to compute the shortest path between them, which becomes the navigation route for the agent. 
Notably, instead of terminating the episode upon reaching the destination, we continuously generate new navigation routes for the agent by repeating this random selection and path planning process. 
This dynamic route assignment continues until the cumulative driving distance within the episode reaches 3000 meters, allowing us to evaluate the agent's performance across diverse navigation scenarios in a single episode.

\subsubsection{CLIP Config}

We employ the CLIP model \citep{radford2021learning} as our foundational VLM for CLG-based semantic reward generation. 
Specifically, we utilize OpenCLIP's ViT-bigG-14 model pre-trained on the LAION-2B dataset with 2.32 English billion image-text pairs \citep{schuhmann2022laion}.
The model applies a patch size of 14$\times$14 pixels and accepts images with a resolution of 224$\times$224 pixels as input, which we obtain by resizing the original CARLA camera images. 
During inference, we leverage CLIP's visual encoder to extract high-dimensional feature representations from the driving scenes, while the text encoder processes our predefined CLG. All CLIP components are kept frozen during our experiments to maintain stable and consistent semantic reward generation.

\subsection{Evaluation Metrics}
To comprehensively evaluate the performance and safety aspects of our autonomous driving system, we employ multiple quantitative metrics that assess both driving efficiency and safety characteristics. For driving efficiency assessment, we measure the average speed (\textbf{AS}) maintained by the vehicle throughout episodes, the route completion (\textbf{RC}) which represents the number of successfully completed routes during one episode, and the total traveled distance (\textbf{TD}) which captures the cumulative distance covered by the vehicle during each episode.

Safety performance is evaluated through several complementary metrics. The fundamental collision rate (\textbf{CR}) measures the percentage of episodes containing collision events. We further analyze collision patterns through two frequency metrics: time-based collision frequency (\textbf{TCF}), measuring collisions per 1000 time steps, and distance-based collision frequency (\textbf{DCF}), measuring collisions per kilometer traveled. 
To assess collision severity, we record the collision speed (\textbf{CS}) at the moment of each collision. 
Additionally, we track the inter-collision time steps (\textbf{ICT}), which measure the average number of time steps between consecutive collision events, providing insights into the temporal distribution of safety incidents.
In the test phase, we also report the success rate (\textbf{SR}) to evaluate the model's ability to successfully reach the destination across 10 predefined routes.

\subsection{Baselines}
We compare our method against state-of-the-art baselines, which can be categorized into two primary groups: expert-designed reward methods and LM-designed reward methods.

\paragraph{Expert-designed Reward Methods} 
We implement the following baselines with manually designed reward functions using both SAC and PPO. These methods include binary rewards that only consider collision states, and summation rewards that combine multiple weighted terms to guide driving behavior:
\begin{itemize}
    \item \textbf{TIRL}~\citep{cao2022trustworthy} employs a simple binary reward that only penalizes collision states with -1 and assigns 0 reward to all other states.
    \item \textbf{Chen-SAC}~\citep{chen2022interpretable} employs a reward function that penalizes collisions, speeding, running out of lane, and excessive steering, while incentivizing forward velocity and controlled lateral acceleration to guide autonomous driving decisions.
    \item \textbf{ASAP-RL-PPO}~\citep{wang2023efficient} employs a reward function that provides positive incentives for forward progress, reaching the destination, and overtaking vehicles, while applying penalties for collisions with other vehicles or road curbs.
    \item \textbf{ChatScene}~\citep{zhang2024chatscene} designs a weighted sum of reward terms that encourages smooth driving behaviors (longitudinal speed, lateral acceleration, and steering control) while penalizing unsafe actions (collisions, out-of-lane driving, and speeding). A small constant reward is added as baseline incentive to facilitate learning.
\end{itemize}

\paragraph{LM-designed Reward Methods}
We also compare against recent approaches that leverage language models for reward design, including both LLM-based and VLM-based methods:
\begin{itemize}
    \item \textbf{Revolve} and \textbf{Revolve-auto}~\citep{hazra2024revolve} introduce an evolutionary framework that utilizes LLMs to generate reward function code based on human feedback. For simplicity, we adopt their best-performing reward function provided in the paper for comparison.
    \item \textbf{VLM-SR}~\citep{baumlivision} uses off-the-shelf VLMs like CLIP to generate binary reward signals by calculating the cosine similarity between image observations and language goals, followed by softmax normalization and thresholding to determine goal achievement.
    \item \textbf{RoboCLIP}~\citep{sontakke2024roboclip} generates sparse reward signals at the end of each episode by computing the similarity between video observations of agent trajectories and a task descriptor. Due to the complexity of autonomous driving, we adapt this baseline by using CLIP to compute the similarity between each frame's image observation and the language goal, generating a dense reward signal.
    \item \textbf{VLM-RM}~\citep{rocamonde2023vision} uses a baseline-target approach to project the current state embedding onto the direction between a baseline state (e.g., ``a car") and a target state (e.g., ``a car is driving safely").
    \item \textbf{LORD}~\citep{ye2024lord} focuses on negative reward generation through VLMs to penalize unsafe driving behaviors, using concrete undesired states like ``collision" to shape the reward signal. We also adapt this baseline by incorporating a dense speed-based reward, denoted as \textbf{LORD-Speed}, to further guide the agent's behavior based on driving speed.
\end{itemize}

\begin{table*}[!t]
  \begin{small}
  \caption{Performance comparison with baselines during training. Mean and standard deviation over 3 seeds. The best results are marked in \textbf{bold}.}
  \label{tab1}
  \centering
  \renewcommand{\arraystretch}{1.5}
  \setlength{\tabcolsep}{4pt}
  \begin{tabular}{@{}cccccccccc@{}}
  \toprule
  Model & Reference & AS~$\uparrow$ & RC~$\uparrow$ & TD~$\uparrow$ & CS~$\downarrow$ & CR~$\downarrow$ & ICT~$\uparrow$ & DCF~$\downarrow$ & TCF~$\downarrow$ \\
  \midrule 
  
  \rowcolor{gray!15}
  \multicolumn{10}{l}{\textit{\textbf{Expert-designed Reward Methods (Binary Rewards) }}} \\
  TIRL-SAC & TR-C'22      & 0.01 {\tiny $\pm$ 0.01} & 0.01 {\tiny $\pm$ 0.003} & 0.21 {\tiny $\pm$ 0.13} & 3.0 {\tiny $\pm$ 1.0}  & \textbf{0.013} {\tiny $\pm$ 0.01} & \textbf{54965} {\tiny $\pm$ 15734} & 14009 {\tiny $\pm$ 3864} & 14.0 {\tiny $\pm$ 0.90}  \\
  TIRL-PPO & TR-C'22      & 0.26 {\tiny $\pm$ 0.25} & 0.04 {\tiny $\pm$ 0.23}  & 2.95 {\tiny $\pm$ 1.46} & 2.9 {\tiny $\pm$ 0.8}  & 0.12 {\tiny $\pm$ 0.07}  & 32507 {\tiny $\pm$ 16076} & 3290 {\tiny $\pm$ 2368} & 5.50 {\tiny $\pm$ 2.91}  \\
  \midrule

  \rowcolor{gray!15}
  \multicolumn{10}{l}{\textit{\textbf{Expert-designed Reward Methods (Summation Rewards) }}} \\
  Chen-SAC & T-ITS'22 & \textbf{19.9} {\tiny $\pm$ 1.07} &  0.68 {\tiny $\pm$ 0.12} & 147.9 {\tiny $\pm$ 32.2} &  9.14 {\tiny $\pm$ 3.66} & 0.12 {\tiny $\pm$ 0.04}  &  2325 {\tiny $\pm$ 518} & 20.3 {\tiny $\pm$ 4.8} & 6.2 {\tiny $\pm$ 0.94} \\
  ASAP-RL-PPO & RSS'23 & 2.86 {\tiny $\pm$ 1.11} & 0.04 {\tiny $\pm$ 0.002} & 3.99 {\tiny $\pm$ 0.37} & 2.25 {\tiny $\pm$ 0.07} & 0.21 {\tiny $\pm$ 0.13} & 9276 {\tiny $\pm$ 4064.5} & 394.4 {\tiny $\pm$ 201.2} & 26.1 {\tiny $\pm$ 19.5} \\
  ChatScene-SAC & CVPR'24 & 17.4 {\tiny $\pm$ 0.20} & 2.0 {\tiny $\pm$ 0.24}   & 717 {\tiny $\pm$ 68}    & 10.1 {\tiny $\pm$ 0.6} & 0.88 {\tiny $\pm$ 0.02}  & 1774 {\tiny $\pm$ 124}    & 4.6 {\tiny $\pm$ 0.5} & 1.63 {\tiny $\pm$ 0.13} \\
  ChatScene-PPO & CVPR'24 & 14.1 {\tiny $\pm$ 0.14} & 0.9 {\tiny $\pm$ 0.13}   & 248 {\tiny $\pm$ 44}    & 5.2 {\tiny $\pm$ 0.1}  & 0.83 {\tiny $\pm$ 0.02}  & 793 {\tiny $\pm$ 69}      & 14.1 {\tiny $\pm$ 0.4} & 3.62 {\tiny $\pm$ 0.02} \\
  \midrule
  
  \rowcolor{gray!15}
  \multicolumn{10}{l}{\textit{\textbf{LLM-based Reward Methods}}} \\
   Revolve & ICLR'25     & 17.6 {\tiny $\pm$ 0.71} & 1.9 {\tiny $\pm$ 1.07}   & 671  {\tiny $\pm$ 437 }  & 9.5 {\tiny $\pm$ 2.9} & 0.76 {\tiny $\pm$ 0.28}  & 1556 {\tiny $\pm$ 875}   & 20.5 {\tiny $\pm$ 25.1} & 5.48 {\tiny $\pm$ 6.19} \\
   Revolve-auto & ICLR'25 & 17.3 {\tiny $\pm$ 0.34} & 1.4 {\tiny $\pm$ 0.07}   & 485  {\tiny $\pm$ 11 }   & 6.0 {\tiny $\pm$ 2.4} & 0.83 {\tiny $\pm$ 0.23}  & 1390 {\tiny $\pm$ 403}   & 6.7 {\tiny $\pm$ 1.4} & 2.27 {\tiny $\pm$ 0.36} \\

  \midrule
  \rowcolor{gray!15}
  \multicolumn{10}{l}{\textit{\textbf{VLM-based Reward Methods}}} \\
   VLM-SR & NeurIPS'23 & 0.31 {\tiny $\pm$ 0.17} & 0.05 {\tiny $\pm$ 0.01} & 5.60 {\tiny $\pm$ 0.91} & 3.03 {\tiny $\pm$ 0.36} & 0.19 {\tiny $\pm$ 0.07} & 15096 {\tiny $\pm$ 4551.5} & 420.0 {\tiny $\pm$ 385.0} & 3.89 {\tiny $\pm$ 1.16} \\
   RoboCLIP & NeurIPS'23    & 0.47 {\tiny $\pm$ 0.23} & 0.22 {\tiny $\pm$ 0.11} & 34.0 {\tiny $\pm$ 21.2} & 4.04 {\tiny $\pm$ 0.59} & 0.29 {\tiny $\pm$ 0.05} & 12598 {\tiny $\pm$ 3002} & 85.3 {\tiny $\pm$ 46.1} & 2.00 {\tiny $\pm$ 0.54} \\
   VLM-RM & ICLR'24       & 0.15 {\tiny $\pm$ 0.09} & 0.07 {\tiny $\pm$ 0.05}   & 7.51 {\tiny $\pm$ 5.76 }   & 3.0 {\tiny $\pm$ 0.5} & 0.16 {\tiny $\pm$ 0.08}  & 17511 {\tiny $\pm$ 923}    & 918 {\tiny $\pm$ 77.6} & 3.53 {\tiny $\pm$ 2.11} \\
   LORD & Arxiv'24        & 0.08 {\tiny $\pm$ 0.06} & 0.04 {\tiny $\pm$ 0.02}   & 4.94 {\tiny $\pm$ 3.81 }   & 3.6 {\tiny $\pm$ 1.4} &  0.095 {\tiny $\pm$ 0.03}  & 16852 {\tiny $\pm$ 1826}   & 2904 {\tiny $\pm$ 1043} & 6.93 {\tiny $\pm$ 2.70} \\
   LORD-Speed & Arxiv'24 & 18.1 {\tiny $\pm$ 0.73} & 2.15 {\tiny $\pm$ 0.82} & 816.9 {\tiny $\pm$ 397.5} & 8.28 {\tiny $\pm$ 2.47} & 0.92 {\tiny $\pm$ 0.01} & 1954 {\tiny $\pm$ 918} & 5.4 {\tiny $\pm$ 3.8} & 1.93 {\tiny $\pm$ 1.32} \\   
   
  \rowcolor{green!10}
  VLM-RL (ours) & -      & 17.4 {\tiny $\pm$ 0.24} & \textbf{4.4} {\tiny $\pm$ 0.25}   & \textbf{1780} {\tiny $\pm$ 139 }  & \textbf{2.6} {\tiny $\pm$ 0.7} & 0.68 {\tiny $\pm$ 0.03}  & 5920 {\tiny $\pm$ 725}   & \textbf{2.5} {\tiny $\pm$ 0.7} & \textbf{0.76} {\tiny $\pm$ 0.14} \\

  \bottomrule
  \end{tabular}
  \end{small}
\end{table*}

\subsection{Main Results}

We present a detailed evaluation of our proposed VLM-RL against various baseline methods in Tabs. \ref{tab1}-\ref{tab2} and Figs. \ref{[main]-expert-binary}-\ref{[main]-vlm}. 
All experiments were conducted using three different random seeds and trained for 1 million steps to ensure statistical significance. 
The performance metrics reported in Tab. \ref{tab1} represent the mean and standard deviation of the final checkpoint during training across these three independent runs. 
For testing results, we selected the best-performing checkpoint from each training run based on comprehensive performance metrics, with the corresponding evaluation results presented in Tab. \ref{tab2}. 
The learning curves shown in Figs. \ref{[main]-expert-binary}-\ref{[main]-vlm} track the training progress of different methods, where solid lines indicate the mean performance across three seeds, and the shaded regions represent one standard deviation from the mean. 
This visualization allows us to observe not only the final performance but also the learning dynamics and stability of different approaches throughout the training process.

\subsubsection{Training Performance Analysis}

We first compare VLM-RL with expert-designed reward methods. From the training curves in Fig. \ref{[main]-expert-binary}, we can see that TIRL exhibits relatively low collision rates and high collision-free intervals. However, this seemingly positive performance is actually a result of the agent's failure to learn basic driving behaviors. 
As shown in Tab. \ref{tab1}, TIRL-SAC achieves only 0.01 km/h average speed, 0.01 route completion, and 0.21m total driving distance, indicating that the agent essentially remains stationary rather than learning to navigate.
In contrast, our VLM-RL demonstrates superior performance across all key metrics. It achieves an average speed of 17.4 km/h while maintaining a low collision speed of 2.6 km/h, and most importantly, successfully completes 4.4 routes with a total driving distance of 1780m. This comprehensive performance indicates that VLM-RL successfully learns both safe driving behaviors and effective navigation strategies.
The poor performance of TIRL can be attributed to the limitations of its simple binary reward design in the context of autonomous driving. Binary rewards that only penalize collisions (-1) while assigning neutral rewards (0) to all other states create a significant exploration challenge. 
In autonomous driving, where the action space is continuous and the state space is highly complex, such sparse binary rewards provide insufficient learning signals for the agent to discover productive driving behaviors. Without positive reinforcement for forward progress or successful navigation, the agent only learns to minimize collision risk by remaining stationary. 
This represents a local optimum that avoids negative rewards but fails to accomplish the actual driving objectives.

Compared to other expert-designed reward methods with weighted summation terms, VLM-RL demonstrates more balanced and stable performance. Chen-SAC achieves a higher average speed of 19.9 km/h but suffers from a high collision frequency of DCF and TCF and low route completion of 0.68 routes, suggesting that its reward design over-emphasizes speed at the expense of safety. 
ASAP-RL-PPO maintains relatively safe driving with a low collision rate of 0.21 but achieves limited progress with only 0.04 routes. This indicates its reward function may be too conservative, leading to overly cautious driving behaviors.
ChatScene variants show moderate performance across metrics but struggle to balance driving efficiency and safety, with their collision rates of 0.88 and 0.83 significantly higher than VLM-RL's 0.68. 
As shown in Fig. \ref{[main]-expert}, our method exhibits more stable and consistent learning progress. While other methods often show high variance or unstable behaviors during training, VLM-RL maintains steady improvement across all metrics. 
It is worth noting that the continuous decrease in collision rate during the latter half of training demonstrates the agent's improving ability to avoid dangerous situations while maintaining high driving efficiency. This learning pattern is unique to our method, as other approaches either show fluctuating collision rates or achieve safety at the cost of progress. 
Additionally, our method achieves the highest route completion of 4.4 and total driving distance of 1780m among all methods. 
This superior performance demonstrates that our CLG-based rewards and hierarchical reward synthesis provide more informative and balanced learning signals compared to manually designed reward functions, effectively guiding the agent to learn both safe and efficient driving behaviors.

\begin{figure*}
  \centerline{\includegraphics[width=0.993\textwidth]{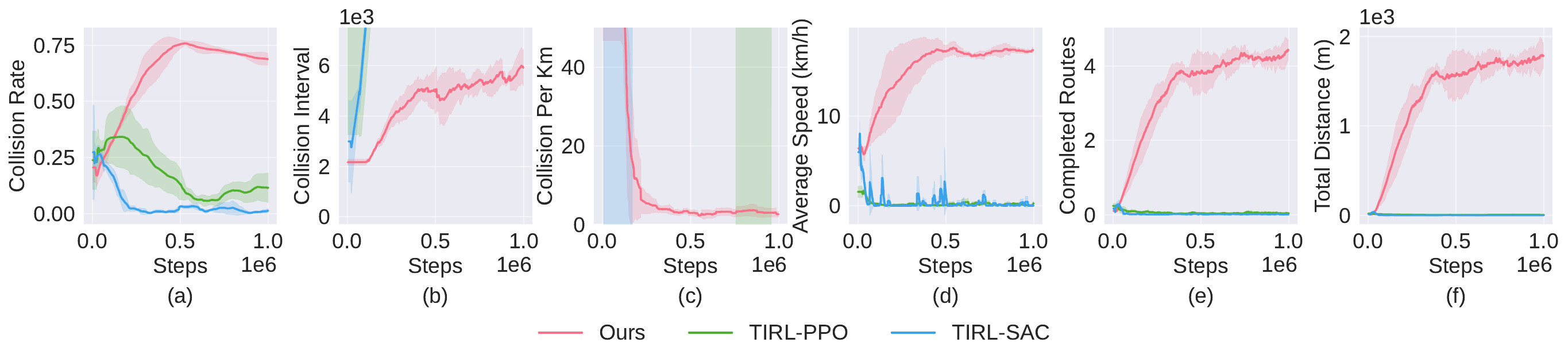}}
  \caption{Performance comparison with expert-designed baselines with binary rewards during training.}
  \label{[main]-expert-binary}
\end{figure*}

\begin{figure*}
  \centerline{\includegraphics[width=0.993\textwidth]{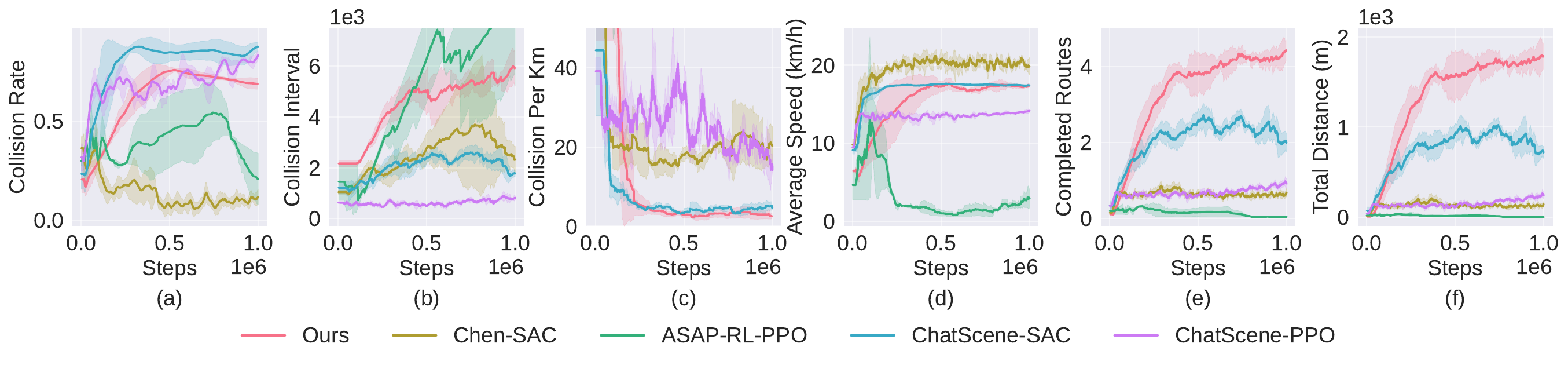}}
  \caption{Performance comparison with expert-designed baselines with summation rewards during training.}
  \label{[main]-expert}
\end{figure*}

Next, we compare our method with LLM-based reward approaches. As shown in Tab. \ref{tab1}, both Revolve and Revolve-auto achieve comparable driving speeds (17.6 km/h and 17.3 km/h) but exhibit relatively high collision rates (0.76 and 0.83) and limited route completion numbers (1.9 and 1.4). 
The training curves in Fig. \ref{[main]-llm} reveal interesting behavioral patterns. While Revolve variants quickly learn to achieve and maintain high average speeds comparable to VLM-RL, they struggle with safety aspects, as evidenced by their consistently high collision rates throughout training. 
In contrast, VLM-RL demonstrates a more balanced learning trajectory, gradually improving both driving efficiency and safety. Most notably, while Revolve and Revolve-auto show limited improvement in route completion and total distance traveled after the initial learning phase, VLM-RL continues to make steady progress, ultimately achieving more than twice the route completion rate (4.4 versus 1.9) and significantly longer driving distances (1780m versus 671m). 
This suggests that our CLG-based hierarchical reward design provides more comprehensive and well-structured learning signals compared to LLM-generated reward functions, enabling the agent to better balance the competing objectives of efficiency and safety in autonomous driving.

\begin{figure*}
  \centerline{\includegraphics[width=0.993\textwidth]{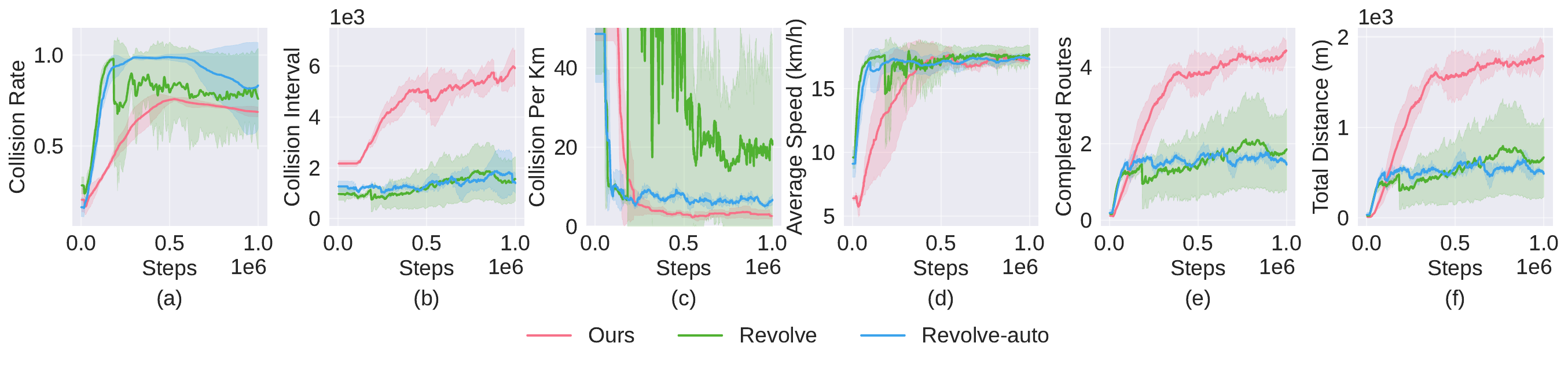}}
  \caption{Performance comparison with LLM-designed baselines during training.}
  \label{[main]-llm}
\end{figure*}

When comparing with VLM-based reward methods, we observe distinct performance patterns. 
VLM-SR, RoboCLIP, and VLM-RM, despite their demonstrated success in robotic tasks, show limited effectiveness in autonomous driving scenarios when relying solely on VLM-derived semantic rewards.
These methods achieve very low average speeds of 0.31, 0.47, and 0.15 km/h respectively, and route completion numbers all below 0.22, suggesting they struggle to learn basic driving behaviors. The training curves in Fig. \ref{[main]-vlm} (a)-(f) reveal that these methods maintain relatively low collision rates primarily because the agents remain nearly stationary, similar to the behavior observed with TIRL.
This performance gap between robotic manipulation and autonomous driving stems from fundamental task differences. 
Robotic tasks typically involve discrete, well-defined goal states such as grasping objects or manipulating tools that can be effectively captured by VLM-based similarity metrics. 
In contrast, autonomous driving requires continuous, dynamic decision-making where the desired behavior is a complex combination of multiple objectives that evolve over time. 
Our hierarchical reward synthesis approach addresses this challenge by combining high-level semantic understanding from CLG with low-level vehicle state information, providing comprehensive learning signals that better guide the driving policy.

\begin{figure*}
  \centerline{\includegraphics[width=0.993\textwidth]{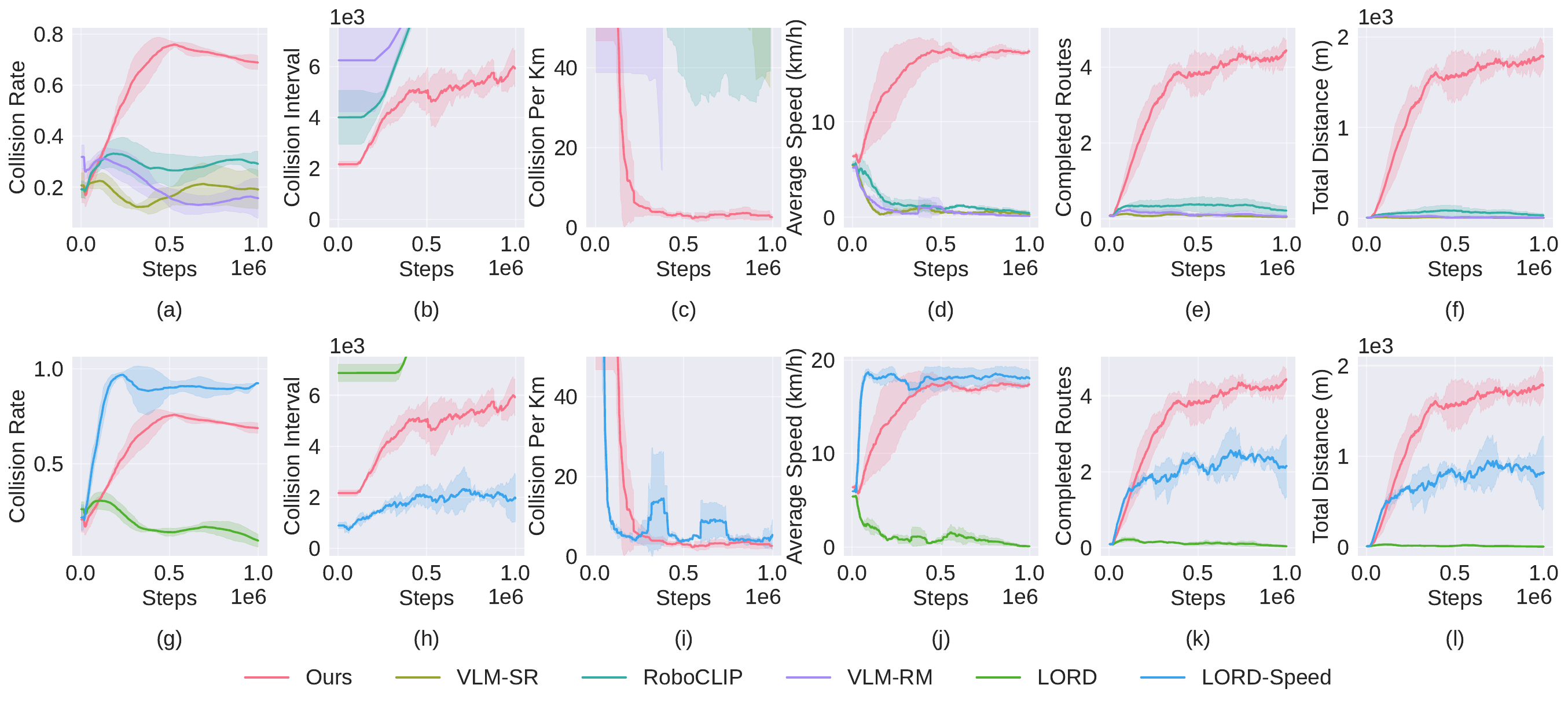}}
  \caption{Performance comparison with VLM-designed baselines during training.}
  \label{[main]-vlm}
\end{figure*}

LORD and its variant LORD-Speed, both designed specifically for autonomous driving, show contrasting performance patterns. While LORD achieves a low collision rate of 0.095, its average speed of 0.08 km/h and route completion of 0.04 indicate similar limitations as other VLM-only approaches. 
It is worth noting that LORD's original success was demonstrated in the HighwayEnv simulator \citep{highway-env} with discrete action spaces and simplified observation representation, which is considerably less complex than our CARLA environment with continuous control and realistic visual inputs. 
LORD-Speed, which incorporates additional speed-based rewards, shows significantly improved performance with an average speed of 18.1 km/h and a route completion rate of 2.15. However, this improvement comes at the cost of safety, as evidenced by its high collision rate of 0.92. 
As shown in Fig. \ref{[main]-vlm} (g)-(l), LORD-Speed quickly achieves high average speeds but fails to effectively balance safety and efficiency, maintaining consistently high collision rates throughout training. 
In contrast, VLM-RL demonstrates more balanced learning progress. This superior performance validates the effectiveness of our CLG-based and hierarchical reward design in providing more comprehensive and balanced learning signals compared to existing VLM-based approaches.

\begin{table}[!t]
\begin{small}
  \caption{Performance comparison with baselines during testing. Mean and standard deviation over 3 seeds. The best results are marked in \textbf{bold}.}
  \label{tab2}
  \centering
  \begin{center}
  \renewcommand{\arraystretch}{1.5}
  \begin{tabular}{@{}ccccccc@{}}
  \toprule
  Model & Reference & AS~$\uparrow$ & RC~$\uparrow$ & TD~$\uparrow$ & CS~$\downarrow$ & SR~$\uparrow$ \\
  \midrule 
  
  \rowcolor{gray!15}
  \multicolumn{7}{l}{\textit{\textbf{Expert-designed Reward Methods (Binary Rewards)}}} \\
  TIRL-SAC & TR-C'22  & 0.37 {\tiny $\pm$ 0.28} & 0.01 {\tiny $\pm$ 0.001} & 4.7 {\tiny $\pm$ 3.5} & 0.24 {\tiny $\pm$ 0.34} & 0.0 {\tiny $\pm$ 0.0}  \\
  TIRL-PPO & TR-C'22   & 0.43 {\tiny $\pm$ 0.23} & 0.01 {\tiny $\pm$ 0.005} & 14.8 {\tiny $\pm$ 9.7} &  0.10 {\tiny $\pm$ 0.15} & 0.0 {\tiny $\pm$ 0.0}  \\
  \midrule 
  
  \rowcolor{gray!15}
  \multicolumn{7}{l}{\textit{\textbf{Expert-designed Reward Methods (Summation Rewards)}}} \\
   Chen-SAC & T-ITS'22 & \textbf{21.4} {\tiny $\pm$ 1.16} & 0.29 {\tiny $\pm$ 0.12} & 663.6 {\tiny $\pm$ 286.7} & 2.07 {\tiny $\pm$ 2.21} & 0.08 {\tiny $\pm$ 0.08}  \\
   ASAP-RL-PPO  & RSS'23 & 1.25 {\tiny $\pm$ 0.30} & 0.01 {\tiny $\pm$ 0.00} & 28.1 {\tiny $\pm$ 3.44} & 0.61 {\tiny $\pm$ 0.61}  & 0.0 {\tiny $\pm$ 0.0}  \\
  ChatScene-SAC  & CVPR'24 & 17.7 {\tiny $\pm$ 0.12} & 0.88 {\tiny $\pm$ 0.03} & 1763.2 {\tiny $\pm$ 90.9} &  1.18 {\tiny $\pm$ 0.46} & 0.73 {\tiny $\pm$ 0.05}  \\
  ChatScene-PPO  & CVPR'24  & 15.3 {\tiny $\pm$ 0.33} & 0.78 {\tiny $\pm$ 0.05} & 1515.6 {\tiny $\pm$ 129.1} &  0.89 {\tiny $\pm$ 0.32} & 0.63 {\tiny $\pm$ 0.05}  \\
  \midrule

  \rowcolor{gray!15}
  \multicolumn{7}{l}{\textit{\textbf{LLM-based Reward Methods}}} \\
   Revolve  & ICLR'25  & 18.4 {\tiny $\pm$ 0.03} & 0.92 {\tiny $\pm$ 0.11} & 1915.3 {\tiny $\pm$ 248.3} &  1.53 {\tiny $\pm$ 2.16} & 0.83 {\tiny $\pm$ 0.24}  \\
   Revolve-auto  & ICLR'25   & 17.2 {\tiny $\pm$ 0.76} & 0.80 {\tiny $\pm$ 0.06} & 1539.6 {\tiny $\pm$ 147.5} &  1.65 {\tiny $\pm$ 0.28} & 0.63 {\tiny $\pm$ 0.05}  \\
  \midrule 
  
  \rowcolor{gray!15}
  \multicolumn{7}{l}{\textit{\textbf{VLM-based Reward Methods}}} \\
   VLM-SR  & NeurIPS'23 & 0.53 {\tiny $\pm$ 0.27} & 0.02 {\tiny $\pm$ 0.00} & 47.9 {\tiny $\pm$ 9.2} & 0.18 {\tiny $\pm$ 0.25} & 0.0 {\tiny $\pm$ 0.0}  \\
   RoboCLIP  & NeurIPS'23 & 0.44 {\tiny $\pm$ 0.05} & 0.07 {\tiny $\pm$ 0.03} & 146.3 {\tiny $\pm$ 62.3} & 1.05 {\tiny $\pm$ 0.58} &  0.0 {\tiny $\pm$ 0.0}  \\
   VLM-RM  & ICLR'24 & 0.20 {\tiny $\pm$ 0.05} & 0.02 {\tiny $\pm$ 0.01} & 35.9 {\tiny $\pm$ 25.8} & \textbf{0.003} {\tiny $\pm$ 0.005} & 0.0 {\tiny $\pm$ 0.0}  \\
   LORD   & Arxiv'24   & 0.17 {\tiny $\pm$ 0.08} & 0.02 {\tiny $\pm$ 0.02} & 45.1 {\tiny $\pm$ 57.1} &  0.02 {\tiny $\pm$ 0.02} & 0.0 {\tiny $\pm$ 0.0} \\
   LORD-Speed  & Arxiv'24 & 18.9 {\tiny $\pm$ 0.36} & 0.87 {\tiny $\pm$ 0.08} & 1783.4 {\tiny $\pm$ 172.8} & 2.80 {\tiny $\pm$ 1.16} & 0.67 {\tiny $\pm$ 0.05}  \\
   \rowcolor{green!10}
   VLM-RL (ours)  & -  & 19.3 {\tiny $\pm$ 1.29} & \textbf{0.97} {\tiny $\pm$ 0.03} & \textbf{2028.2} {\tiny $\pm$ 96.6} & 0.02 {\tiny $\pm$ 0.03} & \textbf{0.93} {\tiny $\pm$ 0.04}  \\
  \bottomrule
  \end{tabular}
  \end{center}
\end{small}
\end{table}

\subsubsection{Performance Evaluation in Testing}
To further validate the effectiveness of VLM-RL, we conduct comprehensive testing evaluations across 10 predefined routes and compare the performance with baseline methods. The route completion metric represents the average route completion rates during each evaluation episode. The testing results in Tab. \ref{tab2} demonstrate significant advantages of our approach compared to the baselines.

The limitations of binary reward methods remain evident in the testing phase. TIRL variants achieve a route completion rate of 0.01 and total driving distances of 4.7m and 14.8m respectively, confirming their failure to learn meaningful driving behaviors. 
Among expert-designed reward methods with weighted summation terms, Chen-SAC maintains the highest average speed at 21.4 km/h but shows limited effectiveness with a 0.08 success rate and 0.29 route completion, indicating its aggressive driving style compromises mission success. 
ChatScene variants demonstrate more balanced performance with success rates of 0.73 and 0.63 respectively, though their collision speeds of 1.18 km/h and 0.89 km/h suggest potential safety concerns.

LLM-based approaches demonstrate competitive performance during testing, with Revolve achieving a success rate of 0.83 and route completion of 0.92. However, their collision speeds of 1.53 km/h and 1.65 km/h indicate persistent safety issues. Most VLM-based methods, including VLM-SR, RoboCLIP, VLM-RM, and LORD, exhibit highly conservative behaviors with route completion rates below 0.07 and success rates of 0.0. LORD-Speed shows significantly improved efficiency metrics but records the highest collision speed at 2.80 km/h among all methods.

In contrast, VLM-RL achieves superior performance across all key metrics during testing. It maintains a high average speed of 19.3 km/h while recording a low collision speed of 0.02 km/h, matching the safety level of the most conservative approaches. Most notably, VLM-RL achieves the highest success rate of 0.93 and route completion of 0.97, along with the longest total driving distance of 2028.2m. These results demonstrate that our method not only learns more effective driving policies but also exhibits better generalization to testing scenarios. The significant improvements in both efficiency and safety metrics validate the effectiveness of our CLG-based and hierarchical reward design in providing comprehensive and well-balanced learning signals for safe driving tasks.

\subsection{Ablation Study}

Building upon our previous baseline comparisons with VLM-SR, RoboCLIP, VLM-RM and LORD, which established the advantages of our hierarchical reward synthesis approach, we conduct ablation studies to further validate the effectiveness of our proposed CLG approach. 
Specifically, we investigate the performance when using only positive language goals  (VLM-RL-pos) and only negative language goals (VLM-RL-neg), respectively. 
These variants allow us to analyze the individual contribution of each goal type and demonstrate why combining both through our contrasting framework leads to superior performance. 
Additionally, we compare the performance of using CARLA's built-in segmentation camera-based BEV as the RL agent's observation (VLM-RL-bev) as an ablation experiment to validate the effectiveness of the BEV design shown in Fig. \ref{bev}. These ablation experiments provide additional insights into the specific mechanisms that contribute to our method's effectiveness.

\begin{figure*}
  \centerline{\includegraphics[width=0.993\textwidth]{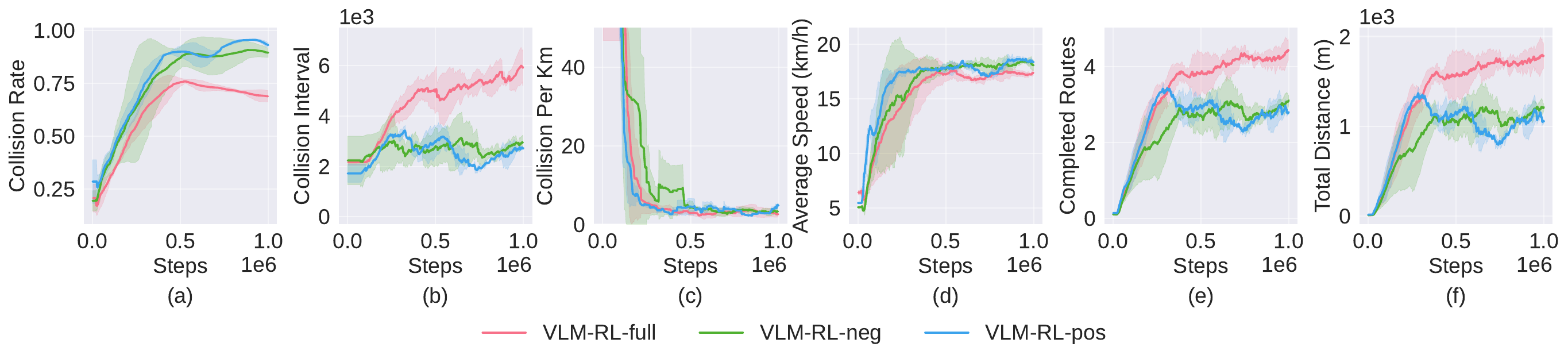}}
  \caption{Performance comparison using different language goals during training.}
  \label{ablation_analysis_pos_neg}
\end{figure*}

\begin{table}
  \caption{Ablation study on the model components during testing. Mean and standard deviation over 3 seeds. The best results are marked in \textbf{bold}.}
  \label{tab3}
  \centering
  \begin{center}
  \renewcommand{\arraystretch}{1.5}
  \begin{tabular}{@{}cccccc@{}}
  \toprule
  Model & AS~$\uparrow$ & RC~$\uparrow$ & TD~$\uparrow$ & CS~$\downarrow$ & SR~$\uparrow$  \\
  \midrule 
  
   VLM-RL-bev  & 18.5 {\tiny $\pm$ 1.58} & 0.92 {\tiny $\pm$ 0.05} & 1905.5 {\tiny $\pm$ 129.7} & 0.48 {\tiny $\pm$ 0.51} & 0.78 {\tiny $\pm$ 0.07}  \\
   VLM-RL-neg    & 19.2 {\tiny $\pm$ 0.72} & 0.90 {\tiny $\pm$ 0.08} & 1901.5 {\tiny $\pm$ 228.1} & 0.94 {\tiny $\pm$ 0.95} & 0.77 {\tiny $\pm$ 0.09}  \\
   VLM-RL-pos & \textbf{19.4} {\tiny $\pm$ 0.88} & 0.89 {\tiny $\pm$ 0.10} & 1817.3 {\tiny $\pm$ 263.3} & 1.47 {\tiny $\pm$ 1.34} & 0.75 {\tiny $\pm$ 0.05}  \\
   \rowcolor{green!10}
   VLM-RL-full  & 19.3 {\tiny $\pm$ 1.29} & \textbf{0.97} {\tiny $\pm$ 0.03} & \textbf{2028.2} {\tiny $\pm$ 96.6} & \textbf{0.02} {\tiny $\pm$ 0.03} & \textbf{0.93} {\tiny $\pm$ 0.04}  \\
  \bottomrule
  \end{tabular}
  \end{center}
\end{table}

As shown in Fig. \ref{ablation_analysis_pos_neg}, we can observe clear patterns that demonstrate the advantages of combining both positive and negative language goals in our full VLM-RL model compared to its variants. In terms of safety metrics, the collision rate shows that both VLM-RL-pos and VLM-RL-neg tend to converge to a higher collision rate. 
The collision interval further supports this observation, with VLM-RL-full maintaining significantly longer intervals between collisions, reaching nearly 6000 steps compared to around 2500-3000 steps for the variants. 
Regarding driving efficiency, the average speed indicates that all three models eventually achieve comparable speeds of around 17-18 km/h. The most striking differences appear in the completed routes and total distance traveled, where VLM-RL-full significantly outperforms both variants, completing 4.4 routes compared to about 3 routes for the variants, and covering nearly 1800m versus approximately 1200m.

The testing results in Tab. \ref{tab3} further validate these observations and provide insights into the effectiveness of our BEV design. 
While VLM-RL-bev achieves competitive performance in terms of average speed (18.5 km/h) and route completion (0.92), its collision speed of 0.48 km/h and success rate of 0.78 indicate compromised safety compared to our full model. This suggests that our custom BEV design better captures critical environmental features for safe navigation. 
The single-goal variants (VLM-RL-neg and VLM-RL-pos) show similar patterns during testing, with high average speeds but elevated collision speeds of 0.94 km/h and 1.47 km/h respectively. 
In contrast, VLM-RL-full maintains comparable efficiency while achieving a remarkably low collision speed of 0.02 km/h and the highest success rate of 0.93. These results demonstrate that the CLG approach leads to more balanced and effective learning, enabling the agent to better navigate the trade-off between safety and efficiency in autonomous driving.

\subsection{VLM-RL Performance Scaling Across CLIP Model Sizes}

To systematically investigate how the scale of VLM affects the performance of VLM-RL, we conducted experiments with four different CLIP model variants of increasing size and complexity: ViT-B-32 (base), ViT-L-14-quickgelu (large), ViT-H-14 (huge), and ViT-bigG-14 (giant).
These models exhibit significant differences in their architectural parameters, ranging from 86M parameters in the baseline ViT-B-32 to over 1B parameters in the ViT-bigG-14. Additionally, they utilize different vision encoder configurations: while ViT-B-32 processes images using 32$\times$32 patches, the larger models employ finer 14$\times$14 patch sizes for increased granularity in visual feature extraction.

\begin{figure*}
  \centerline{\includegraphics[width=0.993\textwidth]{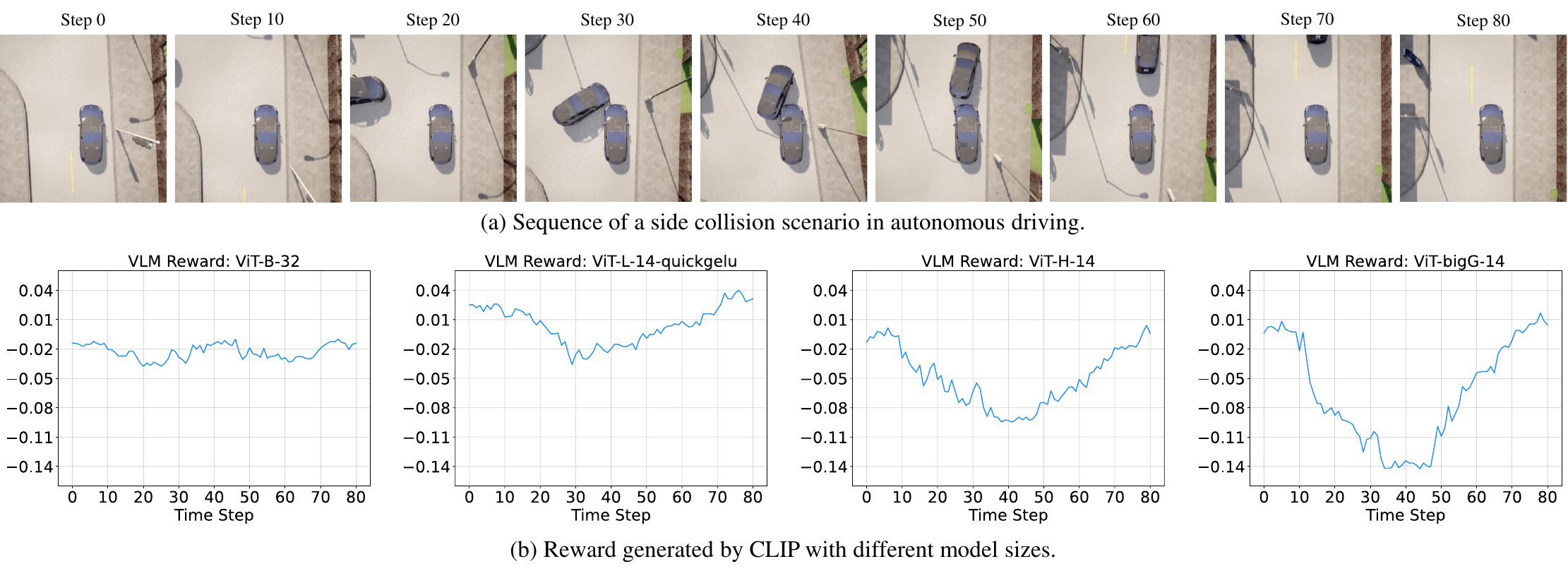}}
  \caption{Base rewards with different CLIP model sizes using the CLG in a side collision scenario.}
  \label{clip_size_analysis}
\end{figure*}

As shown in Fig. \ref{clip_size_analysis}, we first analyze the semantic reward signals generated by different CLIP model variants through a representative case study. The sequence of BEVs in Fig. \ref{clip_size_analysis} (a) depicts a critical safety situation where the agent vehicle experiences a side collision with another vehicle. 
The reward curves in Fig. \ref{clip_size_analysis} (b) reveal a clear correlation between model size and reward signal quality. The smallest model, ViT-B-32, produces relatively flat rewards fluctuating around -0.02, indicating poor sensitivity to the collision event. 
In comparison, ViT-bigG-14 demonstrates remarkably superior performance among all variants, generating the most distinctive and interpretable reward pattern with a sharp decline to -0.14 during the collision phase from steps 30 to 50 before gradually recovering as the vehicles separate. 
The significant enhancement in semantic reward signal discrimination observed in larger models, particularly exemplified by ViT-bigG-14, indicates that increased model capacity enables more sophisticated scene understanding capabilities and consequently generates more effective learning signals for reinforcement learning agents.

\begin{figure*}
  \centerline{\includegraphics[width=0.993\textwidth]{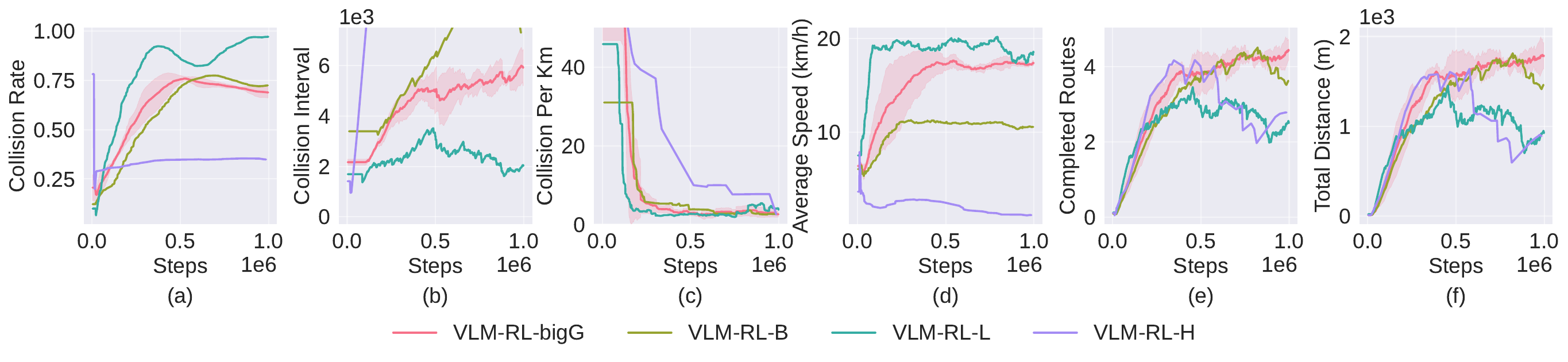}}
  \caption{Performance comparison using different CLIP model sizes during training.}
  \label{ablation_analysis_clip}
\end{figure*}

We further evaluate how different CLIP model sizes affect the performance of RL agents. Fig. \ref{ablation_analysis_clip} presents a comprehensive comparison for four variants: VLM-RL-B, VLM-RL-L, VLM-RL-H, and VLM-RL-bigG. 
The results reveal interesting trade-offs between safety and efficiency across model scales. 
While VLM-RL-H achieves the lowest collision rate, its low average speed of around 2 km/h suggests that the agent fails to learn basic navigation skills, essentially prioritizing safety at the complete expense of functionality.
Conversely, VLM-RL-L exhibits the highest average speed among all variants but at the cost of the highest collision rate, indicating an unsafe bias toward aggressive driving.
VLM-RL-B shows comparable performance to VLM-RL-bigG in terms of completed routes and total distance traveled, but operates at significantly lower speeds, suggesting less efficient navigation.
Among all variants, VLM-RL-bigG demonstrates the most balanced performance, maintaining a moderate collision rate while achieving efficient navigation as evidenced by its competitive speed, route completion, and travel distance metrics. 
These results indicate that the increased model capacity of ViT-bigG-14 enables the agent to better balance the complex trade-offs between safety and efficiency in autonomous driving.

\subsection{Generalization}

Generalization capability is crucial for autonomous driving systems, as they must adapt to diverse environments and conditions beyond their training scenarios. To comprehensively evaluate VLM-RL's adaptability to diverse scenarios, we first compare our method with two best-performing baseline approaches, i.e., ChatScene-SAC and Revolve, across different towns and varying traffic densities. We also demonstrate the versatility of our approach by extending it to PPO algorithm and comparing with PPO-based baselines, showing that our CLG-based reward design is effective across different RL frameworks.

\subsubsection{Different Towns}\label{sec5.7.1}

\begin{table}
\begin{small}
  \caption{Generalization performance across different Towns. Mean and standard deviation over 3 seeds. The best results are marked in \textbf{bold}.}
  \label{tab4}
  \centering
  \begin{center}
  \renewcommand{\arraystretch}{1.5}
  \begin{tabular}{@{}cccccccc@{}}
  \toprule
  Town & Model & AS~$\uparrow$ & RC~$\uparrow$ & TD~$\uparrow$ & CS~$\downarrow$ & SR~$\uparrow$  \\
  \midrule 
  
   \multirow{3}{*}{Town 1} 
   & ChatScene-SAC & 18.2 {\tiny $\pm$ 0.25} & 0.83 {\tiny $\pm$ 0.09} & 4408.8 {\tiny $\pm$ 671.0} & 4.50 {\tiny $\pm$ 2.84} & 0.67 {\tiny $\pm$ 0.17} \\
   & Revolve & 18.9 {\tiny $\pm$ 0.28} & 0.84 {\tiny $\pm$ 0.07} & 4458.7 {\tiny $\pm$ 449.5} & 4.27 {\tiny $\pm$ 2.00} & 0.72 {\tiny $\pm$ 0.17} \\
    & VLM-RL  & \textbf{22.9} {\tiny $\pm$ 0.63} & \textbf{1.00} {\tiny $\pm$ 0.00} & \textbf{5697.6} {\tiny $\pm$ 0.79} & \textbf{0.03} {\tiny $\pm$ 0.05} & \textbf{1.00} {\tiny $\pm$ 0.00}  \\  
  \midrule 
  
   \multirow{3}{*}{Town 2}  
    & ChatScene-SAC & 17.7 {\tiny $\pm$ 0.12} & 0.88 {\tiny $\pm$ 0.03} & 1763.2 {\tiny $\pm$ 90.9} &  1.18 {\tiny $\pm$ 0.46} & 0.73 {\tiny $\pm$ 0.05}  \\
    & Revolve & 18.4 {\tiny $\pm$ 0.03} & 0.92 {\tiny $\pm$ 0.11} & 1915.3 {\tiny $\pm$ 248.3} &  1.53 {\tiny $\pm$ 2.16} & 0.83 {\tiny $\pm$ 0.24}  \\
    & VLM-RL &  \textbf{19.3} {\tiny $\pm$ 1.29} & \textbf{0.97} {\tiny $\pm$ 0.03} & \textbf{2028.2} {\tiny $\pm$ 96.6} & \textbf{0.02} {\tiny $\pm$ 0.03} & \textbf{0.93} {\tiny $\pm$ 0.04}  \\
  \midrule 
  
   \multirow{3}{*}{Town 3} 
   & ChatScene-SAC & 17.8 {\tiny $\pm$ 0.22} & 0.85 {\tiny $\pm$ 0.04} & 3436.5 {\tiny $\pm$ 355.4} & 1.60 {\tiny $\pm$ 0.44}& 0.77 {\tiny $\pm$ 0.12}\\
   & Revolve & 18.5 {\tiny $\pm$ 0.41} & 0.75 {\tiny $\pm$ 0.16} & 2979.3 {\tiny $\pm$ 649.5} & 5.22 {\tiny $\pm$ 0.77} & 0.53 {\tiny $\pm$ 0.21}\\
   & VLM-RL &  \textbf{21.7} {\tiny $\pm$ 0.55} & \textbf{0.91} {\tiny $\pm$ 0.07} & \textbf{3757.8} {\tiny $\pm$ 248.0} & \textbf{1.14} {\tiny $\pm$ 1.54} & \textbf{0.87} {\tiny $\pm$ 0.09}  \\
  \midrule 
  
   \multirow{3}{*}{Town 4} 
   & ChatScene-SAC & 18.1 {\tiny $\pm$ 0.32} & \textbf{0.82} {\tiny $\pm$ 0.08} & \textbf{14139.0} {\tiny $\pm$ 1650.9} & 6.85 {\tiny $\pm$ 1.92} & \textbf{0.70} {\tiny $\pm$ 0.08}\\
   & Revolve & 17.0 {\tiny $\pm$ 2.88} & 0.63 {\tiny $\pm$ 0.10}  & 9874.4 {\tiny $\pm$ 2132.1} & 10.0 {\tiny $\pm$ 1.87} & 0.40 {\tiny $\pm$ 0.08}\\
   &  VLM-RL & \textbf{22.0} {\tiny $\pm$ 3.69} & 0.80 {\tiny $\pm$ 0.17} & 12684.1{\tiny $\pm$ 3608.5} & \textbf{2.15} {\tiny $\pm$ 1.59} & 0.70 {\tiny $\pm$ 0.28}  \\
  \midrule 
  
   \multirow{3}{*}{Town 5} 
   & ChatScene-SAC & 18.4 {\tiny $\pm$ 0.06} & 0.85 {\tiny $\pm$ 0.04} & 2826.3 {\tiny $\pm$ 350.6} & 2.14 {\tiny $\pm$ 1.34} & 0.73 {\tiny $\pm$ 0.09} \\
   & Revolve & 19.0 {\tiny $\pm$ 0.63} & 0.78 {\tiny $\pm$ 0.06} & 2671.6 {\tiny $\pm$ 231.4} & 5.35 {\tiny $\pm$ 1.95} & 0.53 {\tiny $\pm$ 0.09} \\
   & VLM-RL  & \textbf{22.9} {\tiny $\pm$ 0.86} & \textbf{0.93} {\tiny $\pm$ 0.03} & \textbf{3322.5} {\tiny $\pm$ 203.4} & \textbf{0.46} {\tiny $\pm$ 0.54} & \textbf{0.87} {\tiny $\pm$ 0.05}  \\
   \bottomrule
  \end{tabular}
  \end{center}
\end{small}
\end{table}

To evaluate the generalization capability of VLM-RL across different urban and rural environments, we test our model in five distinct towns within the CARLA simulator. As shown in Fig. \ref{towns}, Town 2 serves as the training environment, while Towns 1, 3, 4, and 5 represent previously unseen environments with varying layouts and road structures. The detailed performance comparison is shown in Tab. \ref{tab4}.

In Town 1, VLM-RL demonstrates exceptional performance with perfect success and route completion rates of 1.00, while achieving the highest average speed of 22.9 km/h and maintaining a remarkably low collision speed of 0.03 km/h. In contrast, both baseline methods show reduced performance with success rates below 0.72 and substantially higher collision speeds above 4.20 km/h.
The performance advantage persists in the more challenging Towns 3 and 5, where VLM-RL maintains high success rates of 0.87 and route completion rates above 0.91. Notably, VLM-RL achieves significantly lower collision speeds compared to the baselines, particularly evident in Town 5 where Revolve exhibits a high collision speed of 5.35 km/h versus VLM-RL's 0.46 km/h.
Town 4 presents the most challenging scenario with longer routes, as reflected in the total driving distances exceeding 12000m. In this environment, while ChatScene-SAC achieves marginally better route completion and success rates, VLM-RL maintains superior driving efficiency with the highest average speed of 22.0 km/h and significantly better safety performance, recording a collision speed of 2.15 km/h compared to 6.85 km/h and 10.0 km/h of the baselines.

These results demonstrate that VLM-RL successfully generalizes to diverse driving environments without additional training. The consistent performance advantages across different towns validate that our CLG-based and hierarchical reward design captures fundamental driving principles rather than overfitting to specific environmental features. This robust generalization capability is particularly crucial for real-world autonomous driving applications where vehicles must navigate diverse and previously unseen environments.

\subsubsection{Different Traffic Densities}

\begin{table}
\begin{small}
  \caption{Generalization performance across different traffic densities. Mean and standard deviation over 3 seeds. The best results are marked in \textbf{bold}.}
  \label{tab5}
  \centering
  \begin{center}
  \renewcommand{\arraystretch}{1.5}
  \begin{tabular}{@{}cccccccc@{}}
  \toprule
  Traffic Density & Model &  AS~$\uparrow$ & RC~$\uparrow$ & TD~$\uparrow$ & CS~$\downarrow$ & SR~$\uparrow$  \\
  \midrule 
  
   \multirow{3}{*}{Empty} 
   & ChatScene-SAC & 18.0 {\tiny $\pm$ 0.18} & 1.0 {\tiny $\pm$ 0.0} & 2064.1 {\tiny $\pm$ 4.81} & 0.0 {\tiny $\pm$ 0.0} & 1.0 {\tiny $\pm$ 0.0} \\
   & Revolve & 18.6 {\tiny $\pm$ 0.39} & 1.0 {\tiny $\pm$ 0.0} & 2105.5 {\tiny $\pm$ 21.0} & 0.0 {\tiny $\pm$ 0.0} & 1.0 {\tiny $\pm$ 0.0} \\
   & VLM-RL &  \textbf{23.8} {\tiny $\pm$ 0.29} & \textbf{1.0} {\tiny $\pm$ 0.0} & \textbf{2113.9} {\tiny $\pm$ 0.67} & \textbf{0.0} {\tiny $\pm$ 0.0} & \textbf{1.0} {\tiny $\pm$ 0.0}  \\
  \midrule 
  
   \multirow{3}{*}{Regular} 
    & ChatScene-SAC & 17.7 {\tiny $\pm$ 0.12} & 0.88 {\tiny $\pm$ 0.03} & 1763.2 {\tiny $\pm$ 90.9} &  1.18 {\tiny $\pm$ 0.46} & 0.73 {\tiny $\pm$ 0.05}  \\
    & Revolve & 18.4 {\tiny $\pm$ 0.03} & 0.92 {\tiny $\pm$ 0.11} & 1915.3 {\tiny $\pm$ 248.3} &  1.53 {\tiny $\pm$ 2.16} & 0.83 {\tiny $\pm$ 0.24}  \\
   & VLM-RL & \textbf{19.3} {\tiny $\pm$ 1.29} & \textbf{0.97} {\tiny $\pm$ 0.03} & \textbf{2028.2} {\tiny $\pm$ 96.6} & \textbf{0.02} {\tiny $\pm$ 0.03} & \textbf{0.93} {\tiny $\pm$ 0.04} \\
  \midrule 
  
   \multirow{3}{*}{Dense} 
   & ChatScene-SAC & 17.3 {\tiny $\pm$ 0.15} & 0.85 {\tiny $\pm$ 0.06} & 1734.2 {\tiny $\pm$ 92.3} & 2.71 {\tiny $\pm$ 0.65} & 0.77 {\tiny $\pm$ 0.12} \\
   & Revolve & \textbf{18.3} {\tiny $\pm$ 0.31} & \textbf{0.89} {\tiny $\pm$ 0.06} & \textbf{1861.0} {\tiny $\pm$ 144.3} & 3.53 {\tiny $\pm$ 0.52} & 0.73 {\tiny $\pm$ 0.05} \\
   & VLM-RL & 16.1 {\tiny $\pm$ 1.00} & 0.87 {\tiny $\pm$ 0.06} & 1819.0 {\tiny $\pm$ 166.6} & \textbf{0.11} {\tiny $\pm$ 0.10} & \textbf{0.80} {\tiny $\pm$ 0.08}  \\
  \bottomrule
  \end{tabular}
  \end{center}
\end{small}
\end{table}

To evaluate the robustness of VLM-RL under varying traffic conditions, we test our model in three traffic density settings as shown in Tab. \ref{tab5}: empty scenarios with no other vehicles, regular scenarios which is our default setting with 20 autopilot vehicles, and dense scenarios with 40 autopilot vehicles.

In empty scenarios, all methods achieve perfect success and route completion rates of 1.0, with zero collision speeds, demonstrating their fundamental capability in obstacle-free environments. However, VLM-RL exhibits superior driving efficiency with an average speed of 23.8 km/h, significantly higher than ChatScene-SAC at 18.0 km/h and Revolve at 18.6 km/h.
In dense traffic scenarios, the performance differences become more pronounced in terms of safety. While Revolve achieves marginally better efficiency metrics with the highest average speed of 18.3 km/h and route completion of 0.89, its collision speed increases significantly to 3.53 km/h. Similarly, ChatScene-SAC shows degraded safety performance with a collision speed of 2.71 km/h. In contrast, VLM-RL maintains excellent safety with a collision speed of 0.11 km/h while achieving comparable route completion of 0.87 and the highest success rate of 0.80.
These results demonstrate an important characteristic of VLM-RL. As traffic density increases, the model adapts by prioritizing safety over speed, demonstrating intelligent risk-aware behavior. This adaptive balance between efficiency and safety is particularly valuable for real-world autonomous driving systems that must operate safely across diverse traffic conditions.

\subsubsection{Different RL Algorithms}

\begin{figure*}[!t]
  \centerline{\includegraphics[width=0.993\textwidth]{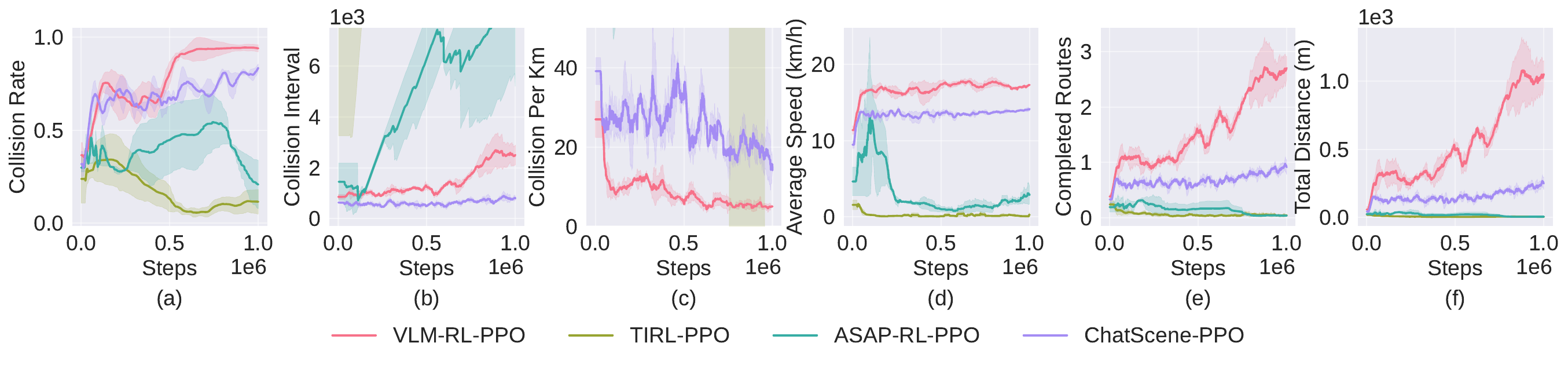}}
  \caption{Performance comparison among PPO-based methods.}
  \label{ppo}
\end{figure*}

We extend our VLM-RL framework to the PPO algorithm to demonstrate its compatibility with different RL algorithms. Fig. \ref{ppo} compares the training performance of VLM-RL-PPO with other PPO-based baselines including TIRL-PPO, ASAP-RL-PPO, and ChatScene-PPO. The results show that our CLG-based hierarchical reward design maintains its effectiveness when implemented with PPO.

As illustrated in Fig. \ref{ppo}, VLM-RL-PPO demonstrates superior performance across most metrics. While TIRL-PPO shows low collision rates, its average speed and completed routes remain close to zero throughout training, indicating the agent fails to learn meaningful driving behaviors. 
ASAP-RL-PPO achieves more stable training but shows limited progress in route completion and total distance traveled. 
ChatScene-PPO exhibits better driving capabilities with moderate average speed but struggles to maintain consistent performance, as shown by the fluctuating collision rates and limited route completion.
Most notably, VLM-RL-PPO shows remarkable learning progress in navigation capabilities. The number of completed routes demonstrates a strong upward trend throughout training, reaching approximately 2.5 routes compared to less than 1 route for other methods. Similarly, the total distance traveled exhibits substantial and consistent growth, ultimately achieving around 1000m, while other PPO-based methods remain below 300m. 
This improvement in both metrics indicates that VLM-RL-PPO effectively learns to navigate complex environments and complete driving tasks. Meanwhile, it maintains a stable average speed of around 15 km/h while successfully managing collision risks.
These results validate that our CLG-based hierarchical reward design can be effectively integrated with different RL algorithms, suggesting the broader applicability of our approach in safe driving tasks.

\subsection{Visualization of VLM Semantic Rewards}

\begin{figure*}
  \centerline{\includegraphics[width=0.993\textwidth]{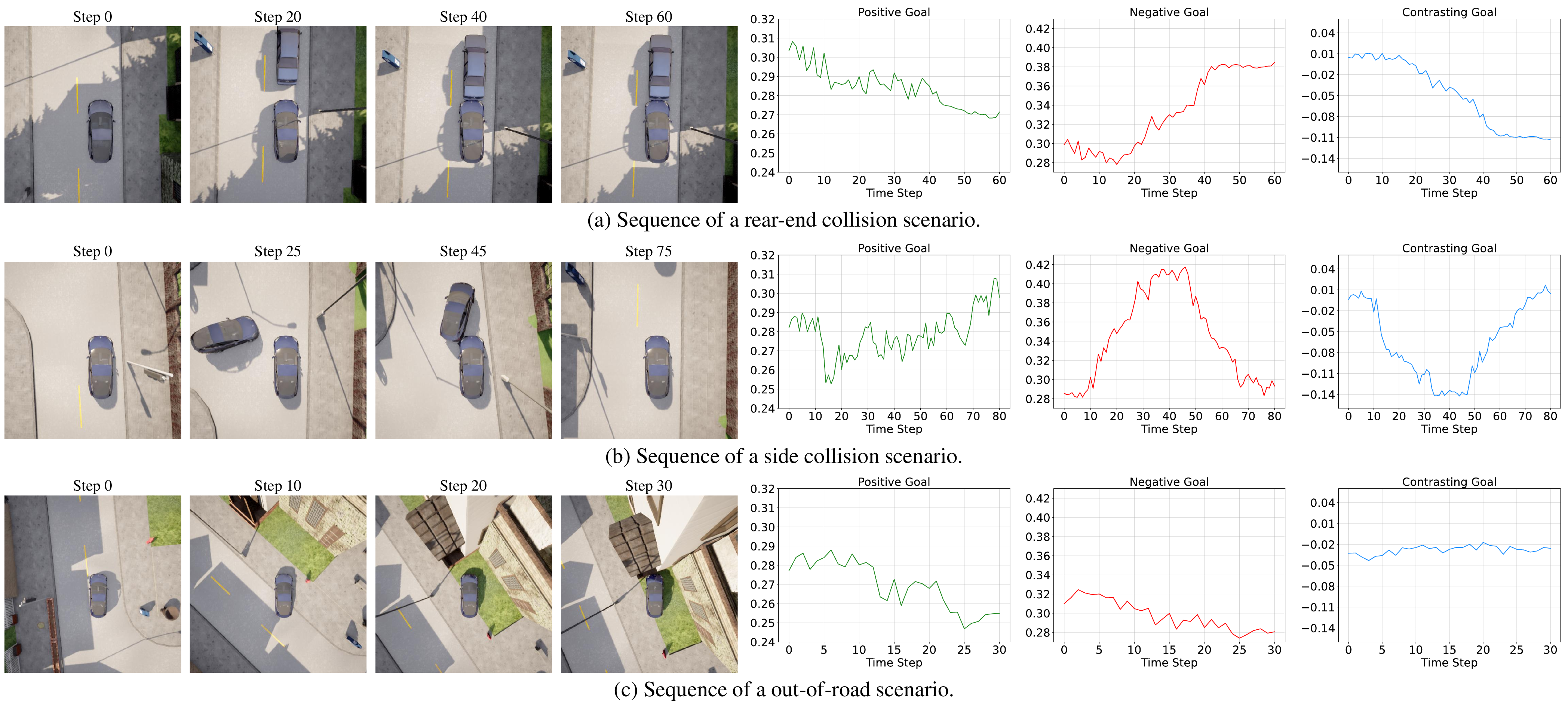}}
  \caption{CLIP rewards using different CLGs in different scenarios.}
  \label{prompts_analysis}
\end{figure*}

To better understand the effectiveness and limitations of CLIP-derived semantic rewards, we visualize three representative failure cases and analyze how the positive, negative, and CLG capture different driving behaviors. Fig. \ref{prompts_analysis} shows image sequences and their corresponding semantic similarity scores for three challenging scenarios.

In the rear-end collision scenario shown in Fig. \ref{prompts_analysis} (a), we observe that the positive goal similarity score gradually decreases as the ego vehicle approaches the leading vehicle, while the negative goal similarity increases. This pattern aligns with our design intention, where unsafe behaviors should result in lower positive goal similarity and higher negative goal similarity. The final contrasting score shows a clear downward trend, correctly reflecting the undesirable nature of the rear-end collision behavior.
The side collision case illustrated in Fig. \ref{prompts_analysis} (b) reveals another interesting pattern. The positive goal similarity fails to show a significant decrease during the collision period from steps 30 to 45, suggesting that VLM sometimes struggles to capture safety violations from the positive perspective alone. However, the negative goal similarity exhibits a pronounced peak during the collision, leading to an appropriate dip in the contrasting score that properly penalizes this unsafe behavior.
The out-of-road scenario presented in Fig. \ref{prompts_analysis} (c) highlights a limitation of purely VLM-based semantic rewards. While the positive goal similarity decreases as the vehicle deviates from the road, the negative goal similarity also shows a declining trend. This results in a relatively flat contrasting score that fails to adequately penalize the out-of-road behavior. This limitation stems from our language goals primarily focusing on collision-related behaviors, lacking explicit consideration of road boundary violations.

These observations highlight both the potential and limitations of VLM-derived semantic rewards. While the CLG can effectively capture many unsafe behaviors, it may fail to provide appropriate learning signals for certain scenarios, particularly those not explicitly described in the language goals. This underscores the necessity of our hierarchical reward synthesis approach, which combines this high-level semantic understanding with low-level vehicle state information to provide more comprehensive and reliable reward signals.

\subsection{Analysis of Hierarchical Reward Synthesis}
\begin{figure*}[!t]
  \centerline{\includegraphics[width=0.993\textwidth]{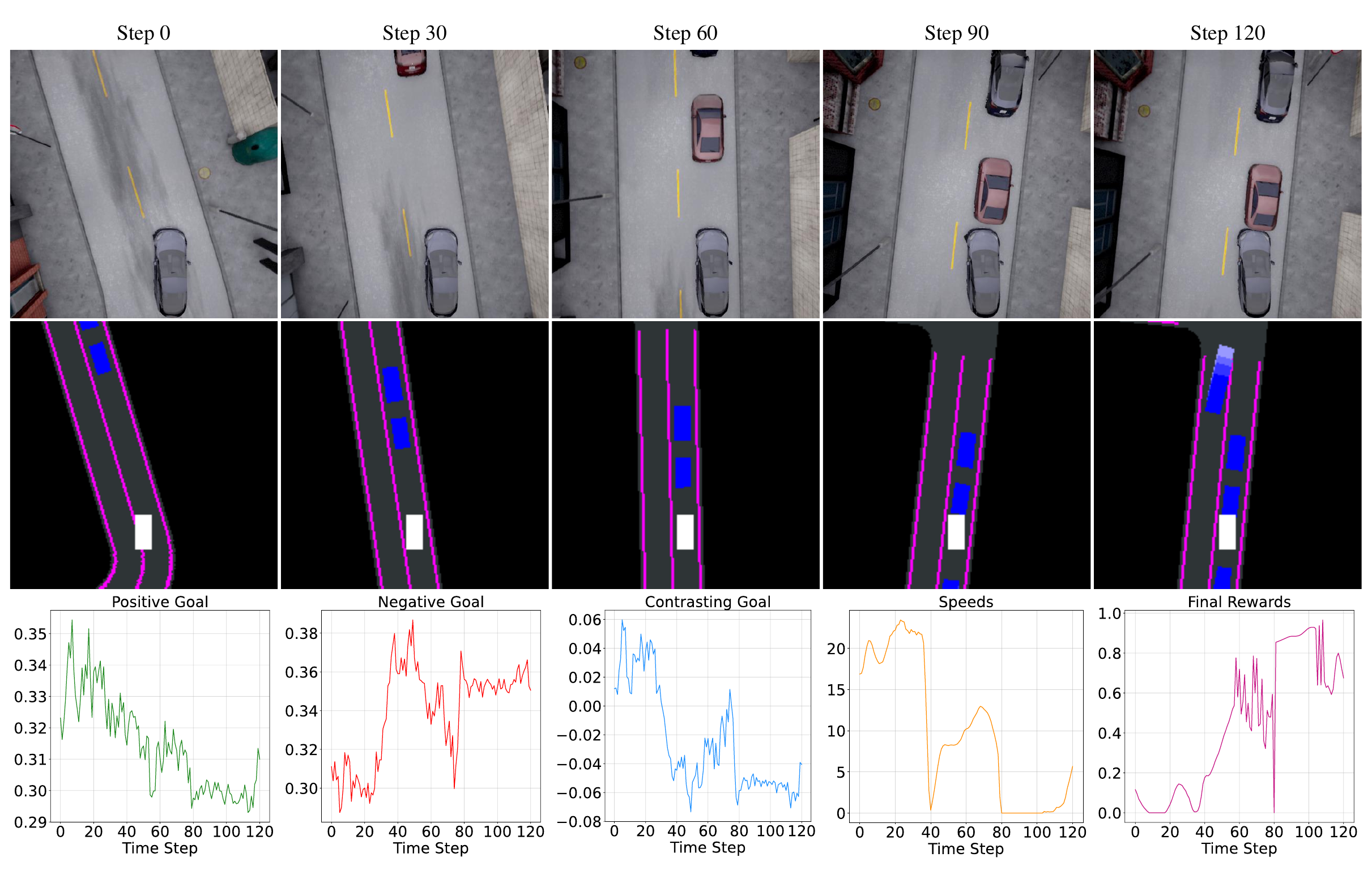}}
  \caption{CLIP rewards using different CLGs in different scenarios.}
  \label{case1}
\end{figure*}

Figs. \ref{case1} and \ref{case2} illustrate how our hierarchical reward synthesis approach combines VLM-derived semantic rewards with vehicle state information to generate comprehensive final rewards. The scenario in Fig. \ref{case1} shows a collision situation where the ego vehicle approaches a stationary vehicle ahead. The first row presents the RGB images, while the second row shows the corresponding semantic segmentation maps.
In the third row, we observe how different reward components evolve throughout the scenario. As the ego vehicle approaches the leading vehicle, the positive goal similarity steadily decreases while the negative goal similarity increases, resulting in a declining contrasting goal score. Similar to Fig. \ref{prompts_analysis} (a), this pattern effectively captures the increasing risk of collision. The speed profile reveals several distinct phases of the ego vehicle's behavior. Before step 40, the ego vehicle maintains a relatively high speed despite approaching the stationary vehicle, a behavior that our VLM identifies as potentially unsafe. Consequently, the final reward remains close to zero during this period, effectively penalizing this aggressive driving behavior.
Between steps 40 and 80, the ego vehicle significantly reduces its speed in response to the potential collision risk. This cautious behavior is rewarded with an increased final reward, demonstrating how our reward synthesis effectively encourages appropriate speed adjustments. During steps 80-100, when the ego vehicle comes to a complete stop behind the leading vehicle, the final reward reaches its highest values, validating that our reward function correctly identifies this as the desirable behavior in this scenario. However, after step 110, when the ego vehicle unexpectedly begins to move again despite the continued presence of the obstacle, we observe a sharp decline in the final reward, properly penalizing this undesired behavior.

\begin{figure*}
  \centerline{\includegraphics[width=0.993\textwidth]{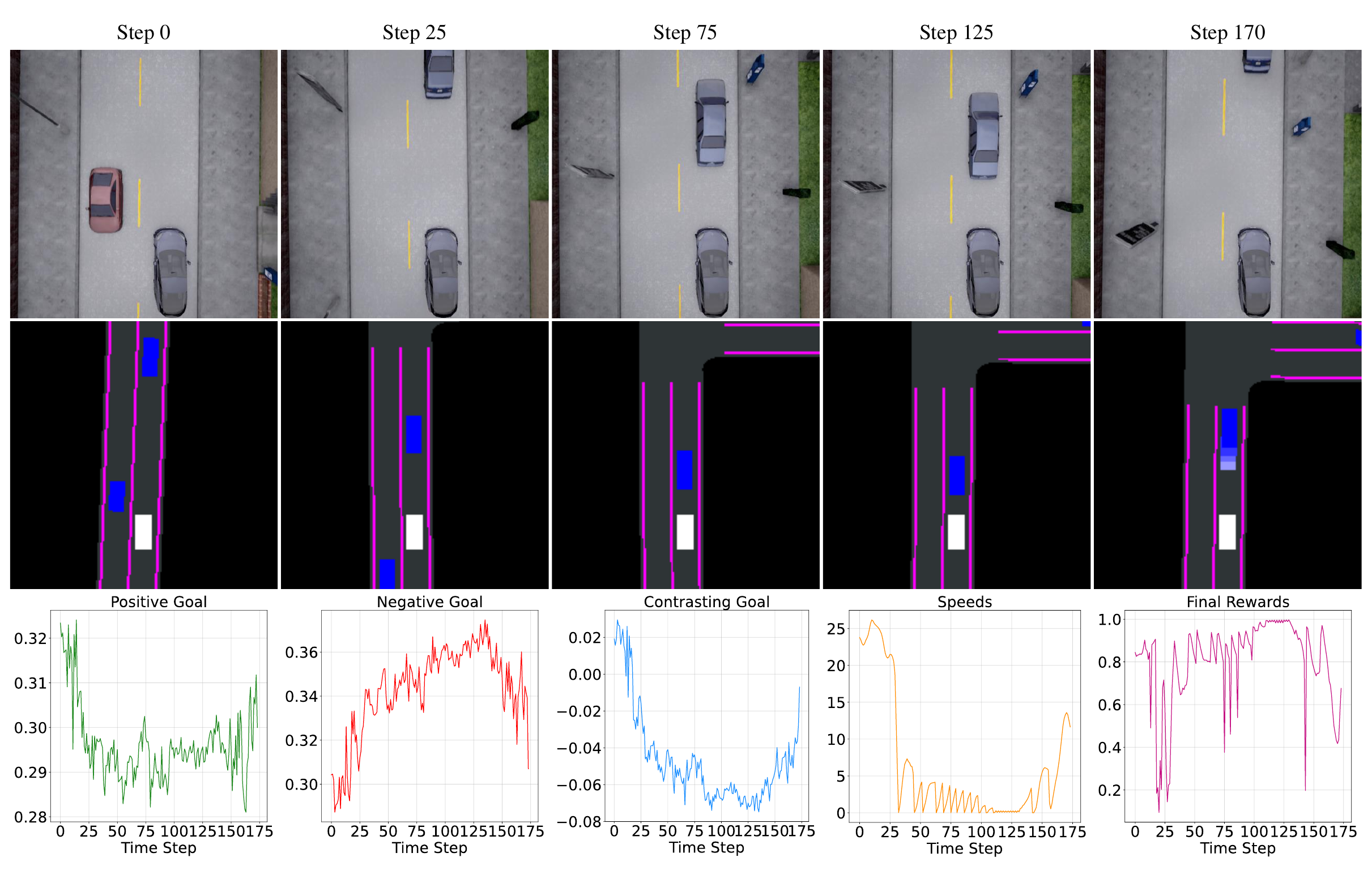}}
  \caption{CLIP rewards using different CLGs in different scenarios.}
  \label{case2}
\end{figure*}

Fig. \ref{case2} presents a successful driving scenario that demonstrates how our reward effectively guides the RL agent through complex, multi-phase driving situations. The scenario involves approaching a stationary vehicle, maintaining a safe following distance, and resuming motion when the leading vehicle departs.
The evolution of VLM-derived semantic rewards effectively captures the changing nature of the scene. The positive goal similarity initially decreases as the ego vehicle approaches the stationary vehicle, then increases after the leading vehicle departs and the road becomes clear. Conversely, the negative goal similarity rises during the approach phase and declines after the leading vehicle's departure. The contrasting goal synthesizes these patterns, providing a clear learning signal that aligns with safe driving behaviors.
The speed profile reveals several distinct driving phases that showcase the agent's learned behavior. Initially, the ego vehicle travels at a relatively high speed of approximately 25 km/h when the road is clear. Between steps 25 and 125, as it detects and responds to the stationary vehicle ahead, the speed gradually reduces to around 5 km/h before coming to a complete stop. The final reward during this period remains consistently high, with minor fluctuations around step 25 due to a slightly delayed braking response. After step 125, when the leading vehicle resumes motion, the ego vehicle demonstrates appropriate acceleration behavior, though we observe some reward fluctuations around step 160 due to momentarily aggressive acceleration that temporarily reduces the safety margin.

The cases in Figs. \ref{case1} and \ref{case2} study highlight how our hierarchical reward synthesis successfully guides the agent through different driving phases while maintaining safety. The final reward effectively balances the semantic understanding of the scene captured by the VLM-derived rewards with concrete vehicle states, encouraging behavior that is both efficient and safe. Our approach demonstrates robustness in handling dynamic situations, appropriately transitioning between different driving modes while maintaining high reward values for safe behavior and penalizing potentially risky actions through reward modulation.

\section{Conclusions and Future Work}
\label{Conclusions and Future Work}
This paper presented VLM-RL, a unified framework that integrates pre-trained Vision-Language Models (VLMs) with Reinforcement Learning (RL) for safe autonomous driving. The key contributions of this work lie in addressing the challenges of reward design by leveraging the semantic understanding capabilities of VLMs. Specifically, we introduced the Contrasting language goal (CLG)-as-reward paradigm, which utilizes both positive and negative language goals to generate semantic rewards. We further introduced a hierarchical reward synthesis method that combines CLG-based rewards with vehicle state information, improving reward stability and offering a more comprehensive learning signal. To mitigate computational challenges, a batch-processing technique was employed, enabling efficient reward computation during training. Extensive experiments conducted in the CARLA simulator demonstrated the efficacy of the VLM-RL. Compared to state-of-the-art baselines, VLM-RL achieved significant improvements in safety, efficiency, and generalization to unseen driving scenarios. Notably, VLM-RL reduced collision rates by 10.5\%, increased route completion rates by 104.6\%, and exhibited robust performance across diverse traffic environments. These results validate the potential of integrating VLMs with RL to develop more reliable and context-aware autonomous driving policies.

While VLM-RL shows promising results, several avenues for future work remain. First, improving the computational efficiency of VLM inference during training and deployment remains a challenge. Techniques such as model distillation or quantization could be investigated to reduce inference latency. Second, expanding the range of driving tasks and scenarios to include interactions with pedestrians, cyclists, and traffic signals—particularly traffic lights, which are not currently considered—could significantly enhance the framework's robustness and realism. Additionally, integrating human-in-the-loop feedback for refining CLG-based rewards dynamically may provide additional adaptability and personalization, further aligning driving policies with human preferences. In summary, VLM-RL provides a robust and scalable solution for reward design in RL-based autonomous driving, offering a promising direction for achieving safer and more generalizable driving policies. Continued research in this area has the potential to bridge the gap between simulation-based learning and real-world deployment, advancing the field of autonomous driving toward human-level safety and reliability.

\section*{Acknowledgment}
This work was supported by the University of Wisconsin-Madison's Center for Connected and Automated Transportation (CCAT), a part of the larger CCAT consortium, a USDOT Region 5 University Transportation Center funded by the U.S. Department of Transportation, Award \#69A3552348305. The contents of this paper reflect the views of the authors, who are responsible for the facts and the accuracy of the data presented herein, and do not necessarily reflect the official views or policies of the sponsoring organization.
	
\appendix

\section{Robustness of the CLG-as-Reward Paradigm}
\label{appendices1}
In this appendix, we show that the CLG reward defined in Eq.~(\ref{eq7}) also enhances robustness against uncertainties or adversarial perturbations. We first establish a lemma concerning the Lipschitz continuity of the cosine similarity measure. We then prove that $R_{\text{CLG}}$ inherits this Lipschitz continuity under suitable assumptions. Finally, we demonstrate how this continuity, combined with the presence of both positive and negative goals, improves the robustness of the learned policy.

\begin{lemma}[Lipschitz Continuity of Cosine Similarity]
\label{lemma:lipschitz_cosine}
Let $\mathbf{u}, \mathbf{v}, \mathbf{w} \in \mathbb{R}^k$ be unit vectors (i.e., $\|\mathbf{u}\| = \|\mathbf{v}\| = \|\mathbf{w}\| = 1$). Then, the cosine similarity function is Lipschitz continuous with Lipschitz constant $L = 1$. In particular, for any such vectors
\begin{equation}
\left|\text{sim}(\mathbf{u}, \mathbf{w}) - \text{sim}(\mathbf{v}, \mathbf{w})\right| \leq |\mathbf{u} - \mathbf{v}|
\end{equation}
\end{lemma}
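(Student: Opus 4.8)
The plan is to prove the inequality directly from the definition of cosine similarity, exploiting the fact that all three vectors are unit vectors. For unit vectors, $\text{sim}(\mathbf{u}, \mathbf{w}) = \mathbf{u}^\top \mathbf{w}$ and $\text{sim}(\mathbf{v}, \mathbf{w}) = \mathbf{v}^\top \mathbf{w}$, so the denominators in the cosine formula are all $1$ and the expression simplifies dramatically. First I would write
\begin{equation}
\left|\text{sim}(\mathbf{u}, \mathbf{w}) - \text{sim}(\mathbf{v}, \mathbf{w})\right| = \left|\mathbf{u}^\top \mathbf{w} - \mathbf{v}^\top \mathbf{w}\right| = \left|(\mathbf{u} - \mathbf{v})^\top \mathbf{w}\right|.
\end{equation}

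Next I would apply the Cauchy--Schwarz inequality to the right-hand side: $\left|(\mathbf{u} - \mathbf{v})^\top \mathbf{w}\right| \le \|\mathbf{u} - \mathbf{v}\| \, \|\mathbf{w}\|$. Since $\mathbf{w}$ is a unit vector, $\|\mathbf{w}\| = 1$, and the bound collapses to $\|\mathbf{u} - \mathbf{v}\|$, which is exactly the claimed inequality with Lipschitz constant $L = 1$. This establishes that for each fixed unit vector $\mathbf{w}$, the map $\mathbf{x} \mapsto \text{sim}(\mathbf{x}, \mathbf{w})$ restricted to the unit sphere is $1$-Lipschitz in the Euclidean norm.

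There is essentially no hard part here: the whole argument is two lines once the unit-norm hypothesis is used to eliminate the normalizing denominators. The only point that requires a word of care is making explicit that the unit-norm assumption is what lets us replace the cosine similarity with a plain inner product; without it, one would have to differentiate the full quotient $\frac{\mathbf{x}^\top \mathbf{w}}{\|\mathbf{x}\|\,\|\mathbf{w}\|}$ and bound its gradient, which introduces a factor depending on $\|\mathbf{x}\|$ and does not give the clean constant $L = 1$. I would also remark briefly that in practice CLIP embeddings are typically $\ell_2$-normalized before the similarity is computed (consistent with Eq.~(\ref{eq5})), so the unit-vector hypothesis is the natural one for the downstream application to $R_{\text{CLG}}$.
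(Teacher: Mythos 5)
Your proposal is correct and follows essentially the same route as the paper's proof: reduce the cosine similarity to a plain inner product via the unit-norm hypothesis, use linearity to factor out $(\mathbf{u}-\mathbf{v})^\top\mathbf{w}$, and apply Cauchy--Schwarz together with $\|\mathbf{w}\|=1$. Your added remarks about why the unit-norm assumption is essential and its consistency with the CLIP normalization are a nice touch but not a substantive departure.
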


\begin{proof}
Since $\mathbf{u}, \mathbf{v}, \mathbf{w}$ are unit vectors, the cosine similarity reduces to the inner product:
\begin{equation}
\text{sim}(\mathbf{u}, \mathbf{w}) = \mathbf{u}^\top \mathbf{w}
\end{equation}
Then,
\begin{align}
\left|\text{sim}(\mathbf{u}, \mathbf{w}) - \text{sim}(\mathbf{v}, \mathbf{w})\right|
&= |\mathbf{u}^\top \mathbf{w} - \mathbf{v}^\top \mathbf{w}| \\
&= |(\mathbf{u} - \mathbf{v})^\top \mathbf{w}| \\
&\leq |\mathbf{u} - \mathbf{v}| \cdot |\mathbf{w}| \quad  \text{(by Cauchy--Schwarz inequality)} \\
&= |\mathbf{u} - \mathbf{v}| \cdot 1 = |\mathbf{u} - \mathbf{v}|
\end{align}
\end{proof}

\begin{theorem}[Lipschitz Continuity of the CLG Reward Function] \label{theorem:lipschitz_reward} Suppose the VLM encoders produce unit-length embeddings. Under this assumption, $R_{\text{CLG}}$ is Lipschitz continuous with Lipschitz constant $L = 1$. Specifically, for any two states $s_1, s_2 \in \mathcal{S}$: 
\begin{equation} 
|R_{\text{CLG}}(s_1) - R_{\text{CLG}}(s_2)| \leq |\mathbf{v}_1 - \mathbf{v}_2|
\end{equation} 
where $\mathbf{v}_i = \text{VLM}_I(\psi(s_i))$.
\end{theorem}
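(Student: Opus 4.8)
The plan is to reduce the claim to a termwise application of Lemma~\ref{lemma:lipschitz_cosine}. First I would unpack the definition of the CLG reward from Definition~\ref{definition3}, writing $R_{\text{CLG}}(s) = \alpha\,\text{sim}(\mathbf{v}(s), \mathbf{v}_{\text{pos}}) - \beta\,\text{sim}(\mathbf{v}(s), \mathbf{v}_{\text{neg}})$ with $\mathbf{v}(s) = \text{VLM}_I(\psi(s))$, $\mathbf{v}_{\text{pos}} = \text{VLM}_L(l_{\text{pos}})$, and $\mathbf{v}_{\text{neg}} = \text{VLM}_L(l_{\text{neg}})$. The standing hypothesis that the VLM encoders produce unit-length embeddings then guarantees that $\mathbf{v}(s_1)$, $\mathbf{v}(s_2)$, $\mathbf{v}_{\text{pos}}$, and $\mathbf{v}_{\text{neg}}$ are all unit vectors, so each cosine similarity appearing in $R_{\text{CLG}}$ meets the hypotheses of Lemma~\ref{lemma:lipschitz_cosine}.

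Next I would form the difference $R_{\text{CLG}}(s_1) - R_{\text{CLG}}(s_2)$ and, using linearity of the expression in the state embedding, split it into an $\alpha$-weighted difference of positive-goal similarities plus a $\beta$-weighted difference of negative-goal similarities. Applying the triangle inequality and then Lemma~\ref{lemma:lipschitz_cosine} to each piece — once with the fixed vector $\mathbf{w} = \mathbf{v}_{\text{pos}}$ and once with $\mathbf{w} = \mathbf{v}_{\text{neg}}$, and $\mathbf{u} = \mathbf{v}_1 := \mathbf{v}(s_1)$, $\mathbf{v} = \mathbf{v}_2 := \mathbf{v}(s_2)$ — bounds both pieces by $\|\mathbf{v}_1 - \mathbf{v}_2\|$. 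Summing the two bounds yields $|R_{\text{CLG}}(s_1) - R_{\text{CLG}}(s_2)| \le (\alpha + \beta)\|\mathbf{v}_1 - \mathbf{v}_2\|$, and invoking the normalization constraint $\alpha + \beta = 1$ from Definition~\ref{definition3} collapses the constant to exactly $L = 1$, which is the claimed inequality.

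There is no substantive calculation to grind through; the whole argument is a triangle-inequality composition on top of Lemma~\ref{lemma:lipschitz_cosine}. The only point requiring care — and the main (minor) obstacle — is making sure the unit-norm hypothesis is genuinely in force for \emph{all four} vectors: in practice CLIP's raw image and text features are not unit-length, so the proof implicitly relies on the $\ell_2$-normalization that cosine similarity performs, and it is exactly this normalization that makes the Cauchy--Schwarz step in Lemma~\ref{lemma:lipschitz_cosine} tight and produces $L = 1$ rather than a state-dependent constant; I would state this explicitly. A secondary remark worth including is that the bound is expressed in the embedding distance $\|\mathbf{v}_1 - \mathbf{v}_2\|$, so no metric or regularity assumption on $\mathcal{S}$, $\psi$, or $\text{VLM}_I$ is needed; upgrading to Lipschitz continuity directly in $s$ would additionally require assuming $\text{VLM}_I \circ \psi$ is Lipschitz and composing the constants, which I would mention but not pursue.
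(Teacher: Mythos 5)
Your proposal is correct and follows essentially the same route as the paper's proof: expand $R_{\text{CLG}}$, apply the triangle inequality to split into the positive-goal and negative-goal similarity differences, bound each by $\|\mathbf{v}_1-\mathbf{v}_2\|$ via Lemma~\ref{lemma:lipschitz_cosine}, and collapse the constant using $\alpha+\beta=1$. Your added remarks on the unit-norm hypothesis and on the bound being stated in embedding distance rather than state distance are sensible clarifications but do not change the argument.
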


\begin{proof} 
Recall the definition of $R_{\text{CLG}}$ from Eq.~(\ref{eq7}): 
\begin{equation} 
R_{\text{CLG}}(s) = \alpha \cdot \text{sim}(\mathbf{v}, \mathbf{v}_{\text{pos}}) - \beta \cdot \text{sim}(\mathbf{v}, \mathbf{v}_{\text{neg}}) 
\end{equation} 
with $\alpha, \beta > 0$ and $\alpha + \beta = 1$, and $\mathbf{v} = \text{VLM}_I(\psi(s))$.

For any $s_1, s_2$, let $\mathbf{v}_1 = \text{VLM}_I(\psi(s_1))$ and $\mathbf{v}_2 = \text{VLM}_I(\psi(s_2))$. Then: 
\begin{align}
|R_{\text{CLG}}(s_1) - R_{\text{CLG}}(s_2)|
&= \big|\alpha(\text{sim}(\mathbf{v}_1,\mathbf{v}_{\text{pos}}) - \text{sim}(\mathbf{v}_2,\mathbf{v}_{\text{pos}})) - \beta(\text{sim}(\mathbf{v}_1,\mathbf{v}_{\text{neg}}) - \text{sim}(\mathbf{v}_2,\mathbf{v}_{\text{neg}}))\big| \\
&\leq \alpha |\text{sim}(\mathbf{v}_1,\mathbf{v}_{\text{pos}}) - \text{sim}(\mathbf{v}_2,\mathbf{v}_{\text{pos}})| + \beta |\text{sim}(\mathbf{v}_1,\mathbf{v}_{\text{neg}}) - \text{sim}(\mathbf{v}_2,\mathbf{v}_{\text{neg}})|
\end{align}

By Lemma~\ref{lemma:lipschitz_cosine}, since embeddings are unit norm,
\begin{equation}
|\text{sim}(\mathbf{v}_1,\mathbf{v}_g) - \text{sim}(\mathbf{v}_2,\mathbf{v}_g)| \leq |\mathbf{v}_1 - \mathbf{v}_2|
\end{equation}
for $g \in \{\text{pos}, \text{neg}\}$. Substituting this into the inequality above:
\begin{align}
|R_{\text{CLG}}(s_1) - R_{\text{CLG}}(s_2)|
&\leq \alpha |\mathbf{v}_1 - \mathbf{v}_2| + \beta |\mathbf{v}_1 - \mathbf{v}_2| \\
&= (\alpha + \beta)|\mathbf{v}_1 - \mathbf{v}_2| = |\mathbf{v}_1 - \mathbf{v}_2|
\end{align}

Thus, $R_{\text{CLG}}$ is Lipschitz continuous with $L=1$.
\end{proof} 

\begin{remark} Lipschitz continuity ensures that small changes in the observation lead to small changes in the reward. This property helps stabilize RL training by reducing variance in gradient estimates and making the learning process more predictable \citep{bhandari2018finite}. \end{remark}

\begin{theorem}[Robustness Enhancement of the CLG Reward] \label{theorem:robustness_clg} 
Assume that both the VLM encoder and the observation function $\psi$ are Lipschitz continuous. Let $L_{\text{VLM}}$ and $L_{\psi}$ denote their respective Lipschitz constants. Consider a perturbed state $s' = s + \delta_s$, where $\delta_s$ is a small perturbation. Then there exists a constant $K = L_{\text{VLM}} L_{\psi}$ such that: \begin{equation} 
|R_{\text{CLG}}(s) - R_{\text{CLG}}(s')| \leq K |\delta_s| 
\end{equation}

This shows that the change in $R_{\text{CLG}}$ under perturbations is linearly bounded by the magnitude of the perturbation, making the policy more robust to adversarial or uncertain disturbances. 
\end{theorem}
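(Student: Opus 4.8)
The plan is to obtain the bound by composing three Lipschitz estimates, reading the dependency chain $s \mapsto \psi(s) \mapsto \text{VLM}_I(\psi(s)) \mapsto R_{\text{CLG}}(s)$ from the inside out. The innermost fact is already available: Theorem~\ref{theorem:lipschitz_reward} shows that, provided the VLM encoders produce unit-length embeddings, $R_{\text{CLG}}$ is $1$-Lipschitz as a function of the image embedding, i.e. $|R_{\text{CLG}}(s) - R_{\text{CLG}}(s')| \le |\mathbf{v} - \mathbf{v}'|$ with $\mathbf{v} = \text{VLM}_I(\psi(s))$ and $\mathbf{v}' = \text{VLM}_I(\psi(s'))$. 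It therefore suffices to control $|\mathbf{v} - \mathbf{v}'|$ in terms of $|\delta_s|$.

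First I would invoke the assumed Lipschitz continuity of the vision encoder: $|\mathbf{v} - \mathbf{v}'| = |\text{VLM}_I(\psi(s)) - \text{VLM}_I(\psi(s'))| \le L_{\text{VLM}} \, |\psi(s) - \psi(s')|$. Next I would apply the assumed Lipschitz continuity of the observation map $\psi$ to get $|\psi(s) - \psi(s')| \le L_{\psi} \, |s - s'| = L_{\psi} \, |\delta_s|$, using $s' = s + \delta_s$. Chaining the three inequalities yields $|R_{\text{CLG}}(s) - R_{\text{CLG}}(s')| \le L_{\text{VLM}} L_{\psi} \, |\delta_s|$, so the claim holds with $K = L_{\text{VLM}} L_{\psi}$. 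Equivalently, this is just the standard fact that a composition of Lipschitz maps is Lipschitz with constant equal to the product of the individual constants, applied here to $R_{\text{CLG}} \circ \text{VLM}_I \circ \psi$ with the outermost constant equal to $1$ by Theorem~\ref{theorem:lipschitz_reward}.

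There is no genuine analytical obstacle here; the argument is a routine concatenation of Lipschitz bounds, and the only point that warrants a sentence of care is consistency of the hypotheses. Theorem~\ref{theorem:lipschitz_reward} requires the embeddings to lie on the unit sphere, so when I write ``$\text{VLM}_I$ is Lipschitz with constant $L_{\text{VLM}}$'' I mean the (post-normalization) encoder viewed as a map into the unit sphere in $\mathbb{R}^k$ — the same object used in Theorem~\ref{theorem:lipschitz_reward} — under which reading the two assumptions are compatible and the chain closes. I would close with a remark mirroring the statement: since the bound is linear in $|\delta_s|$, a small state perturbation changes the CLG reward by at most $O(|\delta_s|)$, which, combined with the two-sided structure of $R_{\text{CLG}}$ (a positive term rewarding proximity to $\mathbf{v}_{\text{pos}}$ and a negative term penalizing proximity to $\mathbf{v}_{\text{neg}}$), is what stabilizes the induced value function and hence the learned policy under adversarial or stochastic disturbances.
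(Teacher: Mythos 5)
Your proposal is correct and follows essentially the same route as the paper's own proof: bound $|\mathbf{v}_s - \mathbf{v}_{s'}|$ by $L_{\text{VLM}} L_{\psi} |\delta_s|$ via the two assumed Lipschitz maps, then apply the $1$-Lipschitz bound of Theorem~\ref{theorem:lipschitz_reward} to conclude. Your extra remark about reading $\text{VLM}_I$ as the post-normalization (unit-sphere-valued) encoder, so that its Lipschitz hypothesis is consistent with the unit-norm assumption of Theorem~\ref{theorem:lipschitz_reward}, is a point of care the paper leaves implicit.
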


\begin{proof} 
Define $\mathbf{v}_s = \text{VLM}_I(\psi(s))$ and $\mathbf{v}_{s'} = \text{VLM}_I(\psi(s'))$. If $\text{VLM}_I$ is $L_{\text{VLM}}$-Lipschitz and $\psi$ is $L_{\psi}$-Lipschitz, we have: 
\begin{equation} 
|\mathbf{v}_s - \mathbf{v}_{s'}| \leq L_{\text{VLM}}|\psi(s) - \psi(s')| \leq L_{\text{VLM}}L_{\psi}|\delta_s|
\end{equation}

From Thm~\ref{theorem:lipschitz_reward}, it follows that: \begin{equation} 
|R_{\text{CLG}}(s) - R_{\text{CLG}}(s')| \leq |\mathbf{v}_s - \mathbf{v}_{s'}|
\end{equation}

Combining these inequalities: 
\begin{equation} 
|R_{\text{CLG}}(s) - R_{\text{CLG}}(s')| \leq L_{\text{VLM}} L_{\psi}|\delta_s| = K|\delta_s|
\end{equation}

\end{proof}

This shows that the CLG reward fluctuation under perturbations is bounded. Now consider that in adversarial or uncertain environments, perturbations may selectively decrease similarity to the positive goal or increase similarity to the negative goal. Since $R_{\text{CLG}}$ includes both positive and negative goals, it provides a form of redundancy: if a perturbation maliciously affects one similarity, the other can partially compensate. A carefully chosen perturbation could drastically alter a single-target reward without this dual structure. Thus, the dual-goal structure of $R_{\text{CLG}}$ inherently enhances robustness. 

\begin{corollary} 
A policy trained with $R_{\text{CLG}}$ is more robust in adversarial or uncertain environments than one trained using only positive or only negative goals. The incorporation of both goals makes it harder for perturbations to significantly degrade the reward, thereby maintaining a stable and safe policy. 
\end{corollary}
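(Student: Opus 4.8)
The plan is to turn the qualitative claim into a comparison of worst-case reward corruption under perturbations, leaning entirely on Lemma~\ref{lemma:lipschitz_cosine} and Theorem~\ref{theorem:robustness_clg}, which are already available. Write the two single-goal reward variants as $R_{\text{pos}}(s)=\text{sim}(\mathbf{v}_s,\mathbf{v}_{\text{pos}})$ and $R_{\text{neg}}(s)=-\text{sim}(\mathbf{v}_s,\mathbf{v}_{\text{neg}})$ (the latter is LORD's reward up to the irrelevant additive constant), so that $R_{\text{CLG}}=\alpha R_{\text{pos}}+\beta R_{\text{neg}}$ with $\alpha+\beta=1$. For a perturbed state $s'=s+\delta_s$ put $\Delta\mathbf{v}=\mathbf{v}_{s'}-\mathbf{v}_s$; Theorem~\ref{theorem:robustness_clg} already bounds $\|\Delta\mathbf{v}\|\le K\|\delta_s\|$, and Lemma~\ref{lemma:lipschitz_cosine} bounds the change in each similarity term by $\|\Delta\mathbf{v}\|$. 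The corollary then follows from two observations about how the adversary's budget $\|\Delta\mathbf{v}\|$ is distributed over the two terms.

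The first observation is the precise form of ``partial compensation''. Each single-goal reward has a \emph{blind direction}: a perturbation that alters only $\text{sim}(\mathbf{v},\mathbf{v}_{\text{neg}})$ and leaves $\text{sim}(\mathbf{v},\mathbf{v}_{\text{pos}})$ unchanged moves $R_{\text{pos}}$ not at all, so two states --- one safe, one dangerous --- can become indistinguishable under $R_{\text{pos}}$; the symmetric statement holds for $R_{\text{neg}}$ and a perturbation aimed at $\text{sim}(\mathbf{v},\mathbf{v}_{\text{pos}})$. Under $R_{\text{CLG}}$ neither direction is blind: a perturbation focused on the positive similarity changes $R_{\text{CLG}}$ by exactly $\alpha\,|\Delta\text{sim}(\mathbf{v},\mathbf{v}_{\text{pos}})|$, and one focused on the negative similarity by exactly $\beta\,|\Delta\text{sim}(\mathbf{v},\mathbf{v}_{\text{neg}})|$ --- in each case only a sub-unit fraction of the damage the adversary would inflict on the single-goal reward it can fully corrupt, because the complementary term of $R_{\text{CLG}}$ is untouched and keeps discriminating safe from unsafe states. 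Thus a perturbation chosen to defeat either $R_{\text{pos}}$ or $R_{\text{neg}}$ only partially defeats $R_{\text{CLG}}$.

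The second observation controls a \emph{general} (non-focused) adversary. On the unit sphere the first-order change of $R_{\text{CLG}}$ along a unit tangent direction $\mathbf{d}$ is $\mathbf{d}^{\top}\Pi_{\mathbf{v}}(\alpha\mathbf{v}_{\text{pos}}-\beta\mathbf{v}_{\text{neg}})$, where $\Pi_{\mathbf{v}}$ is the tangent projection, so the worst-case per-unit corruption equals $\|\Pi_{\mathbf{v}}(\alpha\mathbf{v}_{\text{pos}}-\beta\mathbf{v}_{\text{neg}})\|$; since $\alpha\mathbf{v}_{\text{pos}}-\beta\mathbf{v}_{\text{neg}}$ is a convex combination of $\mathbf{v}_{\text{pos}}$ and $-\mathbf{v}_{\text{neg}}$ and $\|\mathbf{v}_{\text{pos}}\|=\|\mathbf{v}_{\text{neg}}\|=1$, this is at most $\max(\|\Pi_{\mathbf{v}}\mathbf{v}_{\text{pos}}\|,\|\Pi_{\mathbf{v}}\mathbf{v}_{\text{neg}}\|)$, i.e. no larger than the worst-case sensitivity of the more fragile single-goal reward, and strictly smaller whenever the projected goal embeddings are not antipodal --- the generic situation for genuinely contrasting goals. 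Combining either observation with the Lipschitz bound of Theorem~\ref{theorem:robustness_clg} yields a strictly tighter bound on $\sup_s|R_{\text{CLG}}(s)-R_{\text{CLG}}(s')|$ under a fixed perturbation budget than the corresponding bound for $R_{\text{pos}}$ or $R_{\text{neg}}$; a standard argument relating reward perturbations to value perturbations --- whose bound is linear in the sup-norm reward error \citep{bhandari2018finite} --- then converts this into a strictly smaller worst-case degradation of the learned policy's return, which is the assertion of the corollary.

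The main obstacle I expect is the formalization rather than any computation: the corollary is stated qualitatively, and the dual-goal advantage genuinely vanishes in the degenerate case $\mathbf{v}_{\text{pos}}=-\mathbf{v}_{\text{neg}}$, where $R_{\text{CLG}}$ collapses to $R_{\text{pos}}$ and inherits exactly its robustness. I would therefore prove the ``focused-adversary'' statement of the second paragraph unconditionally (the $\alpha$, $\beta$ contraction factors are exact there, and equal $1/2$ under the paper's symmetric choice) and state the general-adversary comparison under the mild non-degeneracy hypothesis $\mathbf{v}_{\text{pos}}^{\top}\mathbf{v}_{\text{neg}}>-1$, which is consistent with the informal level at which the corollary --- and the surrounding discussion --- is phrased.
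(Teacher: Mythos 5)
Your proposal is correct at the level the corollary is stated, but it does considerably more work than the paper, whose ``proof'' is a two-sentence restatement of the intuition: it simply invokes Theorem~\ref{theorem:robustness_clg} and asserts that a perturbation targeting one similarity score cannot fully corrupt a reward that combines two. Your first observation is exactly that argument made quantitative --- the focused adversary that fully corrupts $R_{\text{pos}}$ (or $R_{\text{neg}}$) moves $R_{\text{CLG}}$ by only the fraction $\alpha$ (resp.\ $\beta$) of that damage while the complementary term continues to discriminate --- and this is the part that actually carries the corollary's claim. Your second observation (the tangent-projection bound $\|\Pi_{\mathbf{v}}(\alpha\mathbf{v}_{\text{pos}}-\beta\mathbf{v}_{\text{neg}})\|\le\max(\|\Pi_{\mathbf{v}}\mathbf{v}_{\text{pos}}\|,\|\Pi_{\mathbf{v}}\mathbf{v}_{\text{neg}}\|)$) has no counterpart in the paper and is a genuine addition; note, though, that it only shows $R_{\text{CLG}}$ is no more fragile than the \emph{more} fragile single-goal reward, so by itself it does not establish superiority over \emph{both} single-goal variants (e.g.\ when one projected goal embedding is nearly radial, that single-goal reward is locally almost insensitive and can beat the convex combination). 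Your identification of the degenerate case $\mathbf{v}_{\text{pos}}=-\mathbf{v}_{\text{neg}}$, where the dual-goal advantage vanishes entirely, is an honest caveat the paper omits, and your final step converting a sup-norm reward perturbation bound into a bound on value degradation is also absent from (though consistent with) the paper's appeal to \citet{bhandari2018finite}. In short: same underlying idea, but your version is a rigorous formalization of what the paper leaves as an assertion, at the cost of needing a mild non-degeneracy hypothesis to make the general-adversary comparison strict.
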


\begin{proof}
The corollary follows directly from Theorem~\ref{theorem:robustness_clg} and the preceding analysis. If only a single goal (positive or negative) is used, a perturbation can be designed to specifically degrade that similarity score, causing a significant shift in the reward. However, $R_{\text{CLG}}$ combines both, making it harder for a perturbation to simultaneously degrade both terms advantageously, thus maintaining reward stability and robustness.
\end{proof}

\begin{remark}
These theoretical findings align with empirical results, where policies trained with $R_{\text{CLG}}$ show improved stability and resilience in complex, dynamic, or adversarial driving scenarios.
\end{remark}

\section{Workflow of Hierarchical Reward Synthesis}
\label{appendices2}
\begin{algorithm}[H]
\caption{Hierarchical Reward Synthesis}
\begin{algorithmic}[1]
\Require {Observation $o$, vehicle state $s$, positive language goal $l_{\text{pos}}$, negative language goal $l_{\text{neg}}$, weighting factors $\alpha$, $\beta$ (where $\alpha + \beta = 1$), and predefined thresholds $\theta_{\text{min}}, \theta_{\text{max}}$}
\State \textbf{Compute Semantic Reward:}
\State Compute image embedding: $\mathbf{v}_o \leftarrow \text{VLM}_I(o)$
\State Compute positive goal embedding: $\mathbf{v}_{\text{pos}} \leftarrow \text{VLM}_L(l_{\text{pos}})$
\State Compute negative goal embedding: $\mathbf{v}_{\text{neg}} \leftarrow \text{VLM}_L(l_{\text{neg}})$
\State Compute positive similarity using Eq.~(\ref{eq5}): $s_{\text{pos}} \leftarrow \text{sim}(\mathbf{v}_o, \mathbf{v}_{\text{pos}})$
\State Compute negative similarity using Eq.~(\ref{eq5}): $s_{\text{neg}} \leftarrow \text{sim}(\mathbf{v}_o, \mathbf{v}_{\text{neg}})$
\State Compute semantic reward using Eq.~(\ref{eq7}): $r_{\text{CLG}} \leftarrow \alpha \cdot s_{\text{pos}} - \beta \cdot s_{\text{neg}}$
\State Normalize $r_{\text{CLG}}$ to $[0, 1]$ using Eq.~(\ref{eq10}):
\State \hspace{1em} $r'_{\text{CLG}} \leftarrow 1 - \text{clip}(r_{\text{CLG}}, \theta_{\text{min}}, \theta_{\text{max}})$

\State \textbf{Compute Vehicle State Rewards:}
\State Compute target speed: $v_{\text{target}} \leftarrow r'_{\text{CLG}} \cdot v_{\text{max}}$
\State Compute speed alignment (as described in Section \ref{Framework: VLM-RL}): 
\State \hspace{1em} $r_{\text{speed}} \leftarrow 1 - \frac{|v - v_{\text{target}}|}{v_{\text{max}}}$
\State Compute $f_{\text{center}}(s)$ based on lateral deviation to lane center
\State Compute $f_{\text{angle}}(s)$ based on heading alignment
\State Compute $f_{\text{stability}}(s)$ based on lateral position stability

\State \textbf{Synthesize Reward:}
\State Compute final synthesis using Eq.~(\ref{eq11}):
\State \hspace{1em} $r_{\text{synthesis}} \leftarrow r_{\text{speed}} \times f_{\text{center}} \times f_{\text{angle}} \times f_{\text{stability}}$ \\

\Return $r_{\text{synthesis}}$
\end{algorithmic}
\end{algorithm}

\section{Convergence of the Synthesis Reward Function}
\label{appendices3}
\begin{theorem}[Convergence of Policy Optimization]
\label{theorem:convergence_final}
Under standard assumptions for the SAC algorithm \citep{haarnoja2018soft}, policy optimization with the synthesis reward $R_{\text{synthesis}}(s)$ defined in Eq.~(\ref{eq11}) converges to a local optimum.
\end{theorem}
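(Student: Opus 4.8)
The plan is to reduce the claim to the standard convergence guarantees for soft actor--critic by verifying that the synthesis reward satisfies the regularity conditions those guarantees require---chiefly, boundedness. First I would observe that, by construction in Eq.~(\ref{eq11}), the synthesis reward is a product of four factors $r_{\text{speed}}(s)$, $f_{\text{center}}(s)$, $f_{\text{angle}}(s)$, and $f_{\text{stability}}(s)$, each of which is explicitly constrained to $[0,1]$. Hence $0 \le R_{\text{synthesis}}(s) \le 1$ for every $s \in \mathcal{S}$, and consequently the combined reward $r'_t = r_t^{\text{task}} + \rho \cdot r_t^{\text{synthesis}}$ defined in Eq.~(\ref{eq12}) and used in the Bellman residual of Eq.~(\ref{eq16}) lies in $[0, 1+\rho]$, since $r_t^{\text{task}} \in \{0,1\}$ and $\rho > 0$. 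This boundedness (together with measurability, which is immediate) is the only property of $R_{\text{synthesis}}$ that the rest of the argument needs.

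Second, I would use this bound to show that the soft value and soft $Q$-functions induced by $r'_t$ are well-defined and finite: with discount $\gamma \in [0,1)$, bounded per-step reward, and bounded policy entropy (which holds for the squashed-Gaussian policies SAC uses over the compact action space $[-1,1]^2$), the soft returns appearing in Eq.~(\ref{eq13}) form an absolutely convergent series, so the soft Bellman operator is a $\gamma$-contraction on the space of bounded functions. This is precisely the setting in which the soft policy-evaluation and soft policy-improvement lemmas of \citet{haarnoja2018soft} apply, and their composition---soft policy iteration---converges monotonically to the optimal maximum-entropy policy in the tabular/exact case.

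Third, for the practical actor--critic instantiation used here, I would invoke the standard stochastic-approximation argument: the critic update minimizing Eq.~(\ref{eq16}) and the corresponding actor update are stochastic gradient steps on objectives whose gradients are bounded (a consequence of bounded rewards, bounded $Q$-targets, and the Lipschitz/differentiability assumptions on the function approximators), so under Robbins--Monro step-size conditions the iterates converge almost surely to a stationary point of the (non-convex) maximum-entropy objective, i.e., a local optimum. No further structure of $R_{\text{synthesis}}$ is required beyond what was established in the first step.

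The main obstacle I anticipate is not the reward-specific part but the well-known gap between the tabular soft-policy-iteration convergence proved in \citet{haarnoja2018soft} and convergence of the deep function-approximation version used in the experiments: the latter holds only ``under standard assumptions'' (bounded and Lipschitz critics/actors, suitable two-timescale step sizes, sufficiently rich replay), and making the statement fully rigorous requires either restricting to the exact/tabular setting or importing a two-timescale stochastic-approximation theorem wholesale. I would therefore phrase the theorem's hypotheses so that they explicitly inherit those assumptions, making clear that the sole new verification contributed by this reward design is the boundedness of $R_{\text{synthesis}}$ established at the outset, which is what guarantees the existing convergence machinery can be applied unchanged.
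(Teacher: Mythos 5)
Your proposal takes essentially the same route as the paper: both arguments reduce the claim to the standard SAC convergence guarantees of \citet{haarnoja2018soft} by verifying that $R_{\text{synthesis}}$, being a product of factors each bounded in $[0,1]$, is itself bounded (the paper also notes continuity), after which the existing convergence machinery applies unchanged. Your version is in fact somewhat more careful than the paper's, since you make explicit the contraction/stochastic-approximation steps and the gap between tabular soft policy iteration and the deep function-approximation setting, which the paper glosses over with the phrase ``under standard assumptions.''
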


\begin{proof}
The SAC algorithm seeks to maximize the expected discounted return, augmented by an entropy term:
\begin{equation}
J(\pi_\phi) = \mathbb{E}_{\pi_\phi} \left[ \sum_{t=0}^T \gamma^t (R_{\text{synthesis}}(s_t) + \alpha \mathcal{H}(\pi_\phi(\cdot|s_t))) \right],
\end{equation}
where $\alpha > 0$ is the temperature parameter controlling the trade-off between return and entropy, $\gamma \in [0,1)$ is the discount factor, and $\mathcal{H}(\pi_\phi(\cdot|s_t))$ denotes the entropy of the policy $\pi_\phi$ at state $s_t$.

By construction, the synthesis reward function $R_{\text{synthesis}}(s)$ in Eq.~(\ref{eq11}) is a product of several bounded and continuous components: \begin{itemize} 
\item $r'{_t^{\text{CLG}}}$ is bounded due to the clipping operation in its definition. 
\item Each vehicle state factor ($r_{\text{speed}}(s)$, $f_{\text{center}}(s)$, $f_{\text{angle}}(s)$, $f_{\text{stability}}(s)$) is designed to be bounded and continuous. In detail, $r_{\text{speed}}(s) \in [0,1]$, $f_{\text{center}}(s) \in [0,1]$, $f_{\text{angle}}(s) \in [0,1]$, $f_{\text{stability}}(s) \in [0,1]$. Thus, $R_{\text{synthesis}}(s)$ is itself bounded and continuous, and specifically $R_{\text{synthesis}}(s) \in [0,1]$.
\end{itemize}

Given $R_{\text{synthesis}}(s)$ is bounded and continuous, and assuming that the policy $\pi_\phi$ and the value function approximators are expressive enough (e.g., using neural networks with sufficient capacity), the SAC algorithm satisfies the convergence properties as established in \citep{haarnoja2018soft}.

Therefore, under these standard assumptions, the policy optimization using the synthesis reward function $R_{\text{synthesis}}(s)$ will converge to a local optimum. \end{proof}

\section{Stability of the Synthesis Reward Function}
\label{appendices4}
\begin{theorem}[Lipschitz Continuity of the Synthesis Reward Function] \label{theorem:lipschitz_final_reward} The synthesis reward function $R_{\text{synthesis}}(s)$ is Lipschitz continuous with respect to the state $s$. \end{theorem}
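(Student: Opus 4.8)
The plan is to exploit the fact that a finite product of functions that are each bounded in $[0,1]$ and Lipschitz continuous in $s$ is again Lipschitz, and then verify this hypothesis for the four factors of $R_{\text{synthesis}}$ in Eq.~(\ref{eq11}). First I would record the auxiliary claim: if $g_1,\dots,g_m:\mathcal{S}\to[0,1]$ are Lipschitz with constants $L_1,\dots,L_m$, then their product $g=\prod_{i=1}^{m}g_i$ satisfies $|g(s_1)-g(s_2)|\le\big(\sum_{i=1}^{m}L_i\big)\,|s_1-s_2|$. This follows from the telescoping decomposition
\begin{equation}
g(s_1)-g(s_2)=\sum_{i=1}^{m}\Big(\prod_{j<i}g_j(s_1)\Big)\,\big(g_i(s_1)-g_i(s_2)\big)\,\Big(\prod_{j>i}g_j(s_2)\Big),
\end{equation}
after which each leading and trailing partial product is bounded by $1$, each difference $|g_i(s_1)-g_i(s_2)|$ is bounded by $L_i|s_1-s_2|$, and the triangle inequality finishes the argument.

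Next I would check that each factor is Lipschitz in $s$. For $r_{\text{speed}}(s)=1-|v-v_{\text{target}}|/v_{\text{max}}$, the speed $v$ is a coordinate of the processed state, hence $1$-Lipschitz in $s$, while $v_{\text{target}}=r'{_t^{\text{CLG}}}\,v_{\text{max}}$ is Lipschitz because, by Eq.~(\ref{eq10}), $r'{_t^{\text{CLG}}}$ is an affine image of $\text{clip}(r_t^{\text{CLG}},\theta_{\text{min}},\theta_{\text{max}})$, the clipping map is $1$-Lipschitz, and $r_t^{\text{CLG}}=R_{\text{CLG}}(s)$ is Lipschitz in $s$ by Thm.~\ref{theorem:robustness_clg} under the standing assumption that $\text{VLM}_I$ and $\psi$ are Lipschitz; since $x\mapsto|x|$ is $1$-Lipschitz, $r_{\text{speed}}$ is Lipschitz. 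For $f_{\text{center}}$, $f_{\text{angle}}$, and $f_{\text{stability}}$, I would invoke that the lateral deviation from the lane center, the heading misalignment with the road direction, and the lateral-position-variability statistic are Lipschitz functions of $s$ — they are computed from the vehicle pose, which is part of the state — and that each factor is a bounded, piecewise-smooth function of these quantities and therefore Lipschitz on the (compact) admissible state set. Applying the auxiliary claim with $m=4$ and all factors bounded by $1$ then yields
\begin{equation}
|R_{\text{synthesis}}(s_1)-R_{\text{synthesis}}(s_2)|\le\big(L_{\text{speed}}+L_{\text{center}}+L_{\text{angle}}+L_{\text{stability}}\big)\,|s_1-s_2|,
\end{equation}
which is the desired Lipschitz bound.

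I expect the main obstacle to be the rigorous justification of the Lipschitz continuity of the three geometric factors $f_{\text{center}}$, $f_{\text{angle}}$, $f_{\text{stability}}$ (and of the speed coordinate) with respect to the abstract state $s$: the paper specifies these terms only qualitatively, so the cleanest route is either to supply their explicit closed forms or to add a mild regularity assumption on the state-to-pose map together with compactness of the admissible state space, which upgrades local Lipschitzness to a uniform constant. By contrast, the semantic component is the easy part, since its Lipschitz continuity is already delivered by Thm.~\ref{theorem:robustness_clg}, and both the clipping and the affine normalization in Eq.~(\ref{eq10}) have finite Lipschitz constants.
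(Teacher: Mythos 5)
Your proposal follows essentially the same route as the paper's own proof: decompose $R_{\text{synthesis}}$ as a product of four factors bounded in $[0,1]$, argue each factor is Lipschitz in $s$, and invoke the fact that a product of bounded Lipschitz functions is Lipschitz. Your version is in fact more careful than the paper's --- you supply the telescoping identity behind the product lemma, trace the Lipschitz constant of $r_{\text{speed}}$ back through the clipping and normalization of Eq.~(\ref{eq10}) to the CLG reward, and correctly flag that the geometric factors $f_{\text{center}}$, $f_{\text{angle}}$, $f_{\text{stability}}$ are only specified qualitatively and so require an explicit regularity assumption (a gap the paper's proof glosses over by asserting continuity where Lipschitz continuity is needed).
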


\begin{proof} To show Lipschitz continuity, we need to show that there exists a constant $L>0$ such that for any two states $s_1$ and $s_2$:

\begin{equation}
| R_{\text{synthesis}}(s_1) - R_{\text{synthesis}}(s_2) | \leq L | s_1 - s_2 |.
\end{equation}

The synthesis reward \( R_{\text{synthesis}}(s) \) is computed as:
\begin{equation}
R_{\text{synthesis}}(s) = r_{\text{speed}}(s) \cdot f_{\text{center}}(s) \cdot f_{\text{angle}}(s) \cdot f_{\text{stability}}(s),
\end{equation}
where \( r_{\text{speed}}(s) \), \( f_{\text{center}}(s) \), \( f_{\text{angle}}(s) \), and \(f_{\text{stability}}(s) \) are components that evaluate different aspects of the driving task. To prove Lipschitz continuity of \( R_{\text{synthesis}}(s) \), we analyze the individual components.

The speed alignment reward \( r_{\text{speed}}(s) = 1 - \frac{|v(s) - v_{\text{target}}(s)|}{v_{\text{max}}} \) is Lipschitz continuous because both the current speed \( v(s) \) and target speed \( v_{\text{target}}(s) = r'{_t^{\text{CLG}}} \cdot v_{\text{max}} \) are continuous functions of the state \( s \). Additionally, the absolute value function and normalization by \( v_{\text{max}} \) preserve Lipschitz continuity. Similarly, the lane centering factor \( f_{\text{center}}(s) \) and the heading alignment factor \( f_{\text{angle}}(s) \) are both continuous functions with respect to the state \( s \). The distance stability term \(f_{\text{stability}}(s) \) is also continuous due to its dependence on the positions of the ego vehicle.

Since each component of \( R_{\text{synthesis}}(s) \) is Lipschitz continuous and bounded within \([0, 1]\), their product is also Lipschitz continuous according to the properties of bounded Lipschitz functions. Thus, \( R_{\text{synthesis}}(s) \) satisfies the required Lipschitz condition.

\end{proof}

\begin{remark} Lipschitz continuity of the reward function ensures that small changes in the state lead to small changes in the reward. This property contributes to the stability of the learning process by preventing large fluctuations in the gradient estimates during policy optimization \citep{bhandari2018finite}. \end{remark}

\section{Workflow of Policy Training with Batch-Processing}
\label{appendices5}
\begin{algorithm}[H]
\caption{Policy Training with Batch-Processing and SAC}
\begin{algorithmic}[1]
\Require{Initial policy parameters $\phi$, Q-function parameters $\theta$, empty replay buffer $\mathcal{D}$, batch size $B$, total training steps $T$, VLM encoders $\text{VLM}_I$, $\text{VLM}_L$, language goals $l_{\text{pos}}$, $l_{\text{neg}}$, weighting factors $\alpha$, $\beta$ (where $\alpha + \beta = 1$), and predefined thresholds $\theta_{\text{min}}, \theta_{\text{max}}$}
\State \textbf{Precompute Goal Embeddings:}
\State $\mathbf{v}_{\text{pos}} \leftarrow \text{VLM}_L(l_{\text{pos}})$
\State $\mathbf{v}_{\text{neg}} \leftarrow \text{VLM}_L(l_{\text{neg}})$

\For{$t = 1$ to $T$}
    \State Observe state $s_t$ and raw observation $o_t$
    \State Select action $a_t \sim \pi_{\phi}(a_t | s_t)$
    \State Execute $a_t$ in environment
    \State Receive reward $r_t$ (initially set to $0$), next state $s_{t+1}$, and observation $o_{t+1}$
    \State Store transition $(o_t, s_t, a_t, r_t, o_{t+1}, s_{t+1})$ in $\mathcal{D}$

    \If{\text{time to update}}
        \State Sample mini-batch $\{(o_i, s_i, a_i, r_i, o_{i+1}, s_{i+1})\}_{i=1}^B$ from $\mathcal{D}$
        \For{each transition in mini-batch}
            \State \textbf{Compute Semantic Reward:}
            \State $\mathbf{v}_{o_i} \leftarrow \text{VLM}_I(o_i)$
            \State $s_{\text{pos}} \leftarrow \text{sim}(\mathbf{v}_{o_i}, \mathbf{v}_{\text{pos}})$
            \State $s_{\text{neg}} \leftarrow \text{sim}(\mathbf{v}_{o_i}, \mathbf{v}_{\text{neg}})$
            \State Compute CLG reward using Eq.~(\ref{eq7}): 
            \State \hspace{1em} $r_{\text{CLG}} \leftarrow \alpha \cdot s_{\text{pos}} - \beta \cdot s_{\text{neg}}$
            \State Normalize $r_{\text{CLG}}$ using Eq.~(\ref{eq10}):
            \State \hspace{1em} $r'{_t^{\text{CLG}}} = \frac{\text{clip}(r_t^{\text{CLG}}, \theta_{\text{min}}, \theta_{\text{max}}) - \theta_{\text{min}}}{\theta_{\text{max}} - \theta_{\text{min}}}$

            \State \textbf{Compute Vehicle State Rewards:}
            \State Compute $v_{\text{target}} \leftarrow r_{\text{CLG}} \cdot v_{\text{max}}$
            \State $r_{\text{speed}} \leftarrow 1 - \frac{|v_i - v_{\text{target}}|}{v_{\text{max}}}$
            \State Compute $f_{\text{center}}$, $f_{\text{angle}}$, $f_{\text{stability}}$ based on vehicle state

            \State \textbf{Compute Synthesis Reward:}
            \State Compute $r_{\text{synthesis}}$ using Eq.~(\ref{eq11}):
            \State \hspace{1em} $r_{\text{synthesis}} \leftarrow r_{\text{speed}} \times f_{\text{center}} \times f_{\text{angle}} \times f_{\text{stability}}$

            \State Update $r_i \leftarrow r_{\text{synthesis}}$ in replay buffer
        \EndFor

        \State \textbf{Update Critic Networks:}
        \State Update Q-function parameters $\theta$ by minimizing $J_Q(\theta)$ as in Eq.~(\ref{eq16})

        \State \textbf{Update Policy Network:}
        \State Update policy parameters $\phi$ by minimizing $J_{\pi}(\phi)$ as in Eq.~(\ref{eq14})

        \State \textbf{Adjust Temperature Parameter:}
    \EndIf
\EndFor
\end{algorithmic}
\end{algorithm}

\bibliography{mybibfile}
	
\end{document}